\documentclass[12pt,a4paper]{article}
\usepackage{graphicx}
%
\let\vec\relax 
\DeclareMathAccent{\vec}{\mathord}{letters}{"7E} 

\usepackage{lmodern}


\usepackage{amssymb}
\usepackage{amsmath,empheq, amsfonts}
\usepackage[section]{placeins} 
\usepackage{mathtools}  
\usepackage{amsthm}
\usepackage{amssymb}
\usepackage{amsthm}
\usepackage{booktabs}
\usepackage{adjustbox}
\usepackage{url}
\usepackage{algorithm} 
\usepackage{multirow} 

\newtheorem{mythm}{Theorem}
\newtheorem{mydef}{Definition}
 
\usepackage{xcolor}
\providecommand{\blue}[1]{\textcolor{black}{#1}}

\usepackage{silence} 
\WarningFilter*{latex}{Text page \thepage\space contains only floats}





\addtolength{\oddsidemargin}{-.5in}%
\addtolength{\evensidemargin}{-.5in}%
\addtolength{\textwidth}{1in}%
\addtolength{\textheight}{1.3in}%
\addtolength{\topmargin}{-.7in}


\DeclarePairedDelimiter\floor{\lfloor}{\rfloor}

\DeclareMathOperator{\argmin}{\mbox{argmin}}
\newcommand{\rank}{\mathrm{rank}}
\newcommand{\diag}{\mathrm{diag}}
\newcommand{\trace}{\mathrm{trace}}
\newcommand{\med}{\mbox{median}}
\newcommand{\mad}{\mbox{mad}}
\newcommand{\MD}{\mbox{MD}}
\newcommand{\LD}{\mbox{LD}}

\newcommand{\eps}{\varepsilon}

\newcommand{\hmu}{\hat{\mu}}
\newcommand{\hsigma}{\hat{\sigma}}
\newcommand{\hSigma}{\hat{\Sigma}}




\newcommand{\mF}{\mathcal{F}}

\newcommand{\tF}{\tilde{\mathcal{F}}}
\newcommand{\tK}{\tilde{K}}

\begin{document}
	
\title{Outlier detection in non-elliptical 
       data\\ by kernel MRCD}
	
	
\author{Joachim Schreurs, Iwein Vranckx, 
  Mia Hubert,\\  
	Johan A.K. Suykens, 
	Peter J. Rousseeuw\linebreak \\ \\
  KU Leuven, Belgium}
	
	%

\date{March 29, 2021} 
	
\maketitle
	
\begin{abstract}
The minimum regularized covariance determinant 
method (MRCD) is a robust estimator for multivariate 
location and scatter, which detects outliers by 
fitting a robust covariance matrix to the data. 
Its regularization ensures that the covariance 
matrix is well-conditioned in any dimension.
The MRCD assumes that the non-outlying 
observations are roughly elliptically distributed, 
but many datasets are not of that form. 
Moreover, the computation time of MRCD increases
substantially when the number of variables goes up, 
and nowadays datasets with many variables are common.
The proposed Kernel Minimum Regularized Covariance 
Determinant \mbox{(KMRCD)} estimator addresses 
both issues.
It is not restricted to elliptical data because it
implicitly computes the MRCD estimates in a kernel 
induced feature space.
A fast algorithm is constructed that starts from
kernel-based initial estimates and exploits the 
kernel trick to speed up the subsequent
computations. Based on the KMRCD 
estimates, a rule is proposed to flag outliers. 
The KMRCD algorithm performs well in simulations,
and is illustrated on real-life data.
\end{abstract}

\vskip0.3cm
\noindent
{\it Keywords:} Anomaly detection, High dimensional 
data, Kernelization, Minimum covariance determinant.
Regularization.
%

\section{Introduction}
\label{sec:introduction}
	
The minimum covariance determinant (MCD) estimator
introduced in 
\cite{rousseeuw1984least,rousseeuw1985multivariate} 
is a robust 
estimator of multivariate location and covariance.				
It forms the basis of robust versions of 
multivariate techniques such as discriminant 
analysis, 
principal component analysis, factor analysis 
and multivariate regression, see e.g. \cite{Hubert:ReviewHighBreakdown,Hubert:WIRE-MCD2} 
for an overview.
The basic MCD method is quite intuitive.
Given a data matrix of $n$ rows with $p$ columns, 
the objective is to find $h < n$ observations 
whose sample covariance matrix has the lowest 
determinant. 
The MCD estimate of location is then the 
average of those $h$ points, whereas the scatter 
estimate is a multiple of their covariance matrix. 
The MCD has good robustness properties. 
It has a high breakdown value, that is, it can 
withstand a substantial number of outliers.
The effect of a small number of potentially far
outliers is measured by its influence function,
which is bounded \cite{croux1999influence}. 

Computing the MCD was difficult at first
but became faster with the algorithm of
\cite{rousseeuw1999fast} and the deterministic 
algorithm DetMCD \cite{hubert2012deterministic}.
An algorithm for $n$ in the millions was recently
constructed \cite{DeKetelaere:RT-DetMCD}.
But all algorithms for the original MCD require
that the dimension $p$ be lower than $h$ in 
order to obtain an invertible covariance matrix.
In fact it is recommended that $n>5p$ in practice 
\cite{rousseeuw1999fast}. 
This restriction implies that the original MCD 
cannot be applied to datasets with more variables
than cases, that are commonly found in spectroscopy 
and areas where sample acquisition is difficult or 
costly, e.g. in the field of omics data. 

A solution to this problem was recently proposed 
in \cite{boudt2020minimum}, which introduced the 
minimum regularized covariance determinant (MRCD) 
estimator. 
The scatter matrix of a subset of $h$ observations
is now a convex combination of its sample 
covariance matrix and a target matrix. 
This makes it possible to use the MRCD estimator 
when the dimension exceeds the subset size. 
But the computational complexity of MRCD still
contains a term $O(p^3)$ from the
covariance matrix inversion, which limits its use 
for high-dimensional data. Another restriction
is the assumption that the non-outlying 
observations roughly follow an elliptical 
distribution.

To address both issues we propose a generalization 
of the MRCD which is defined in a kernel induced 
feature space $\mathcal{F}$, where the proposed 
estimator exploits the kernel trick: the 
$p \times p$ covariance matrix is not calculated 
explicitly but replaced by the calculation of a 
$n \times n$ centered kernel matrix, resulting 
in a computational speed-up in case $n \ll p$. 
Similar ideas can be found in the literature, see 
e.g. \cite{dolia2006minimum,dolia2007kernel} which
kernelized the minimum volume ellipsoid
\cite{rousseeuw1984least,rousseeuw1985multivariate}. 
The results of the KMRCD algorithm with the 
linear kernel $k(x,y) = x^\top y$ and 
radial basis function (RBF) kernel 
$k(x,y) = e^{-\|x-y\|^2/(2\sigma^2)}$ are shown 
in Figure \ref{fig:toy1}. 
This example will be described in detail in
Section \ref{sec:Rapplication}.

The paper is organized as follows.
Section \ref{sec:themcd} describes the MCD and 
MRCD estimators.	
Section \ref{subsec:kmcdintro} proposes the
kernel MRCD method. Section \ref{sec:algo} 
describes the kernel-based initial estimators 
used as well as a kernelized refinement 
procedure, and proves that the optimization 
in feature space is equivalent to an
optimization in terms of kernel matrices. 
The simulation study in Section 
\ref{sec:simulation} confirms the robustness 
of the method as well as the improved 
computation speed when using a linear kernel. 
Section \ref{sec:Rapplication} illustrates
KMRCD on three datasets, and 
Section \ref{sec:conclusion} concludes.
	
\begin{figure}[ht]
\centering
\includegraphics[width=0.46\textwidth]
	{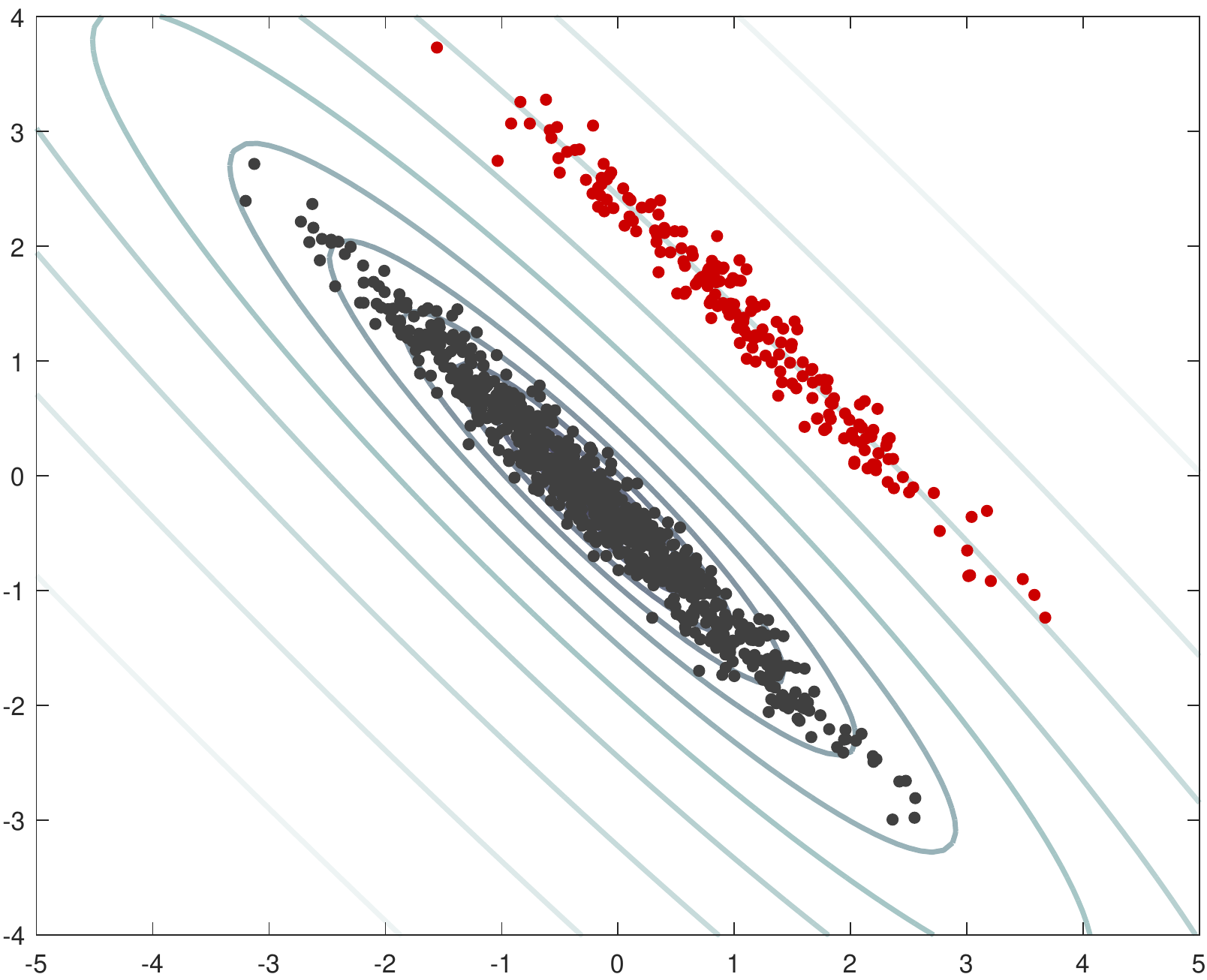} 
\hspace{0.5cm}
\includegraphics[width=0.46\textwidth]
  {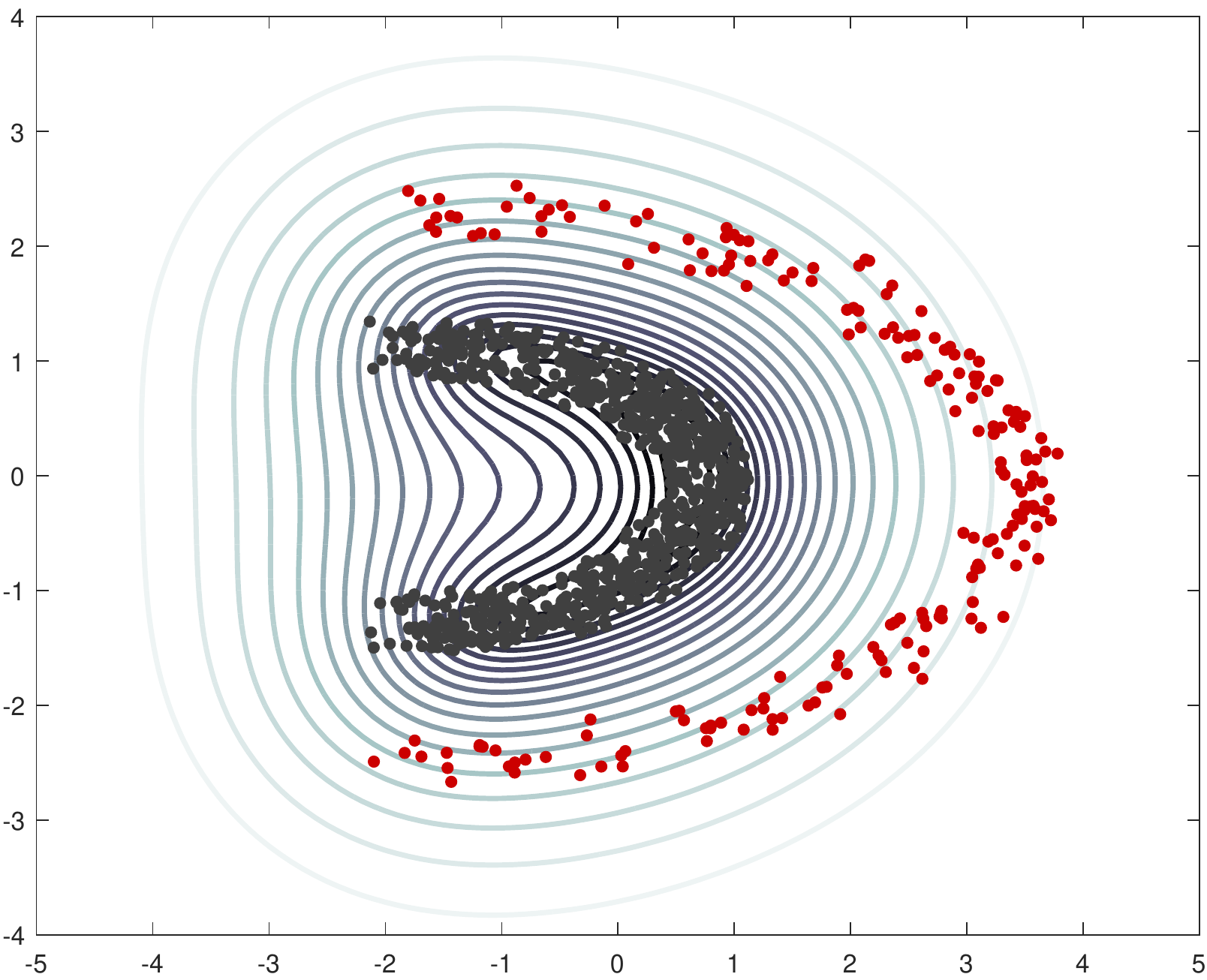}
\caption{Illustration of kernel MRCD on two 
datasets of which the non-outlying part is
elliptical (left) and non-elliptical (right).
Both datasets contain $20\%$ of outlying 
observations. 
The generated regular observations are shown 
in black and the outliers in red. 
In the panel on the left a linear kernel was
used, and in the panel on the right a
nonlinear kernel.
The curves on the left are contours of 
the robust Mahalanobis distance in the 
original bivariate space.
The contours on the right are based on the
robust distance in the kernel-induced
feature space.}
\label{fig:toy1}
\end{figure}

\section{The MCD and MRCD methods}
\label{sec:themcd}	
	
\subsection{The Minimum Covariance Determinant 
estimator}
\label{subsec:mcd}	

Assume that we have a $p$-variate dataset $X$ 
containing $n$ data points, where the $i$-th 
observation 
$x_i = (x_{i1}, x_{i2}, \dots, x_{ip})^\top$ 
is a $p$-dimensional column vector.
We do not know in advance which of these points
are outliers, and they can be located anywhere.
The objective of the MCD method is to find a
set $H$ containing the indices of $|H|=h$ points
whose sample covariance matrix has the lowest 
possible determinant. 
The user may specify any value of $h$ with
$n/2 \leqslant h < n$. The remaining $n-h$ 
observations could potentially be outliers. 
For each $h$-subset $H$ the location estimate 
$c^H$ is the average of these $h$ points:
\begin{equation*}
	c^H = \frac{1}{h} \sum_{i \in H}x_i
\label{eq:plaincentering}
\end{equation*}
whereas the scatter estimate is a multiple of 
their covariance matrix, namely
\begin{equation*}
	c_{\alpha}\hSigma^H = 
	\frac{c_{\alpha}}{h-1} \sum_{i \in H}
	(x_i - c^H) (x_i - c^H)^\top
\label{eq:plaincovar}
\end{equation*}
where $c_{\alpha}$ is a consistency factor 
\cite{croux1999influence} that depends on 
the ratio $\alpha = h/n$.
The MCD aims to minimize the determinant of 
$\hat{\Sigma}^H$ among all $ H \in \mathcal{H}$, 
where the latter denotes the collection of all 
possible sets $H$ with $|H| = h$:
\begin{equation}
\label{eq:mcdobj}
	\hat{\Sigma}_{\mathrm{MCD}} = 
	\underset{H \in \mathcal{H}}\argmin 
	\; \det(\hSigma^H)\;.
\end{equation}		
\blue{Computing the exact MCD has 
combinatorial complexity, so it is 
infeasible for all but tiny datasets.
However, the approximate algorithm FastMCD
constructed in \cite{rousseeuw1999fast}
is feasible.}
FastMCD uses so-called concentration 
steps (C-steps) to minimize \eqref{eq:mcdobj}. 
Starting from any given $\hat{\Sigma}^H$, the 
C-step constructs a more concentrated 
approximation by calculating the Mahalanobis 
distance of every observation based on the 
location and scatter of the current subset $H$:		
\begin{equation*}
\label{eq:smd}
	\MD(x,c^H,\hSigma^H) =
	\sqrt{(x - c^H)^\top 
	(\hat{\Sigma}^H)^{-1} (x - c^H)}\;.
\end{equation*}		
These distances are sorted and the $h$ 
observations with the lowest 
$\MD(x_i,c^H,\hat{\Sigma}^H)$ form 
the new $h$-subset, which is guaranteed to 
have an equal or lower 
determinant \cite{rousseeuw1999fast}. 
The C-step can be iterated until convergence.	

\subsection{The Minimum Regularized Covariance 
            Determinant estimator}
	
The minimum regularized covariance determinant 
estimator (MRCD) is a generalization of the 
MCD estimator to high dimensional 
data \cite{boudt2020minimum}. 
The MRCD subset $H$ is defined by minimizing 
the determinant of the regularized covariance 
matrix $\hat{\Sigma}^H_{\mathrm{reg}}$:
\begin{equation*}
	\hSigma_{\mathrm{MRCD}} =
	\underset{H \in \mathcal{H}}\argmin
	\left(\det(\hSigma^H_{\mathrm{reg}})\right),
\end{equation*}
where the regularized covariance matrix is 
given by
\begin{equation*}
	\hSigma^H_{\mathrm{reg}} =\rho T + 
	(1-\rho) c_{\alpha} \hat{\Sigma}^H
\end{equation*}
with $0 < \rho < 1$ and $T$ a predetermined 
and well-conditioned symmetric and positive 
definite target matrix. 
The determination of $\rho$ is done in a 
data-driven way such that 
$\hat{\Sigma}_{\mathrm{MRCD}}$ has a 
condition number at most $\kappa$, for 
which \cite{boudt2020minimum} proposes 
$\kappa = 50$. The MRCD algorithm 
starts from six robust, well-conditioned 
initial estimates of location and scatter, 
taken from the DetMCD 
algorithm \cite{hubert2012deterministic}. 
Each initial estimate is followed by 
concentration steps, and at the end the 
subset $H$ with the lowest determinant is
kept.
\blue{Note that approximate algorithms like
FastMCD and MRCD are much faster than 
exhaustive enumeration, but one can no 
longer formally prove a breakdown value.
Fortunately, simulations confirm the high 
robustness of these methods. Also note 
that such approximate algorithms are 
guaranteed to converge, because they iterate 
C-steps starting from a finite number of 
initial fits. The algorithm
may converge to a local minimum of the 
objective rather than its global minimum, but 
simulations have confirmed the accuracy of
the result.}
	
\section{The Kernel MRCD Estimator} 
\label{subsec:kmcdintro}

We now turn our attention to kernel 
transformations \cite{scholkopf2002learning}, 
formally defined as follows.
	
\begin{mydef}
A function $k:\mathcal{X} \times \mathcal{X}
\rightarrow \mathbb{R}$ is called a kernel 
on $\mathcal{X}$ iff there exists a real
Hilbert space $\mF$ and a map 
$\phi: \mathcal{X} \rightarrow \mF$ 
such that for all $x$, $y$ in $\mathcal{X}$:
\begin{equation*}
	k(x, y)=\langle\phi(x),\phi(y)\rangle,
\end{equation*}
where $\phi$ is called a feature map and 
$\mathcal{F}$ is called a feature space.
\end{mydef} 
We restrict ourselves to positive 
semidefinite (PSD) kernels.
A symmetric function 
$k:\mathcal{X} \times \mathcal{X} \to 
 \mathbb{R}$ 
is called PSD iff 
$\sum_{i=1}^n \sum_{j=1}^n c_i c_j 
 k(x_i, x_j) \geqslant 0$
for any $x_1, \dots, x_n$ in $\mathcal X$ and
any $c_1, \dots, c_n$ in $\mathbb{R}$. 
Given an $n \times p$ dataset $X$, 
its kernel matrix is defined as 
$K = \Phi \Phi^\top$ with 
$\Phi = [\phi(x_1),...,\phi(x_n)]^\top$. 
The use of kernels makes it possible to operate 
in a high-dimensional, implicit feature space 
without computing the coordinates of the data 
in that space, but rather by replacing 
inner products by kernel matrix entries. 
A well known example is given by kernel 
PCA \cite{scholkopf1998nonlinear}, where linear 
PCA is performed in a kernel-induced feature 
space $\mathcal{F}$ instead of the original 
space $\mathcal{X}$. 
Working with kernel functions has the 
advantage that non-linear kernels enable the 
construction of non-linear models. 
Note that the size of the kernel matrix is 
$n \times n$, whereas the covariance 
matrix is  $p \times p$. The latter is an
advantage when dealing with datasets
for which $n \ll p$, for then the memory and 
computational requirements are considerably 
lower.
	
Given an $n \times p$ dataset 
$X = \{x_1, \ldots, x_n\} $ we thus
get its image $\{\phi(x_1),\ldots\phi(x_n)\}$
in feature space, where it has the average
\begin{equation*}
	c_\mathcal{F} = \frac{1}{n} 
	   \sum_{i=1}^{n} \phi(x_i)\;.
\end{equation*}
Note that the dimension of the feature space 
$\mathcal{F}$ may be infinite.
However, we will restrict ourselves to the 
subspace $\tilde{\mathcal{F}}$ spanned by 
$\{\phi(x_1) - c_\mathcal{F},\dots,
 \phi(x_n) - c_\mathcal{F}\}$ 
so that 
$m:=\mathrm{dim}(\tilde{\mathcal{F}}) 
\leqslant n-1$. 
In this subspace the points
$\phi(x_i) - c_\mathcal{F}$ 
thus have at most $n-1$ coordinates.
The covariance matrix in the feature space 
given by
\begin{equation*}
	\hSigma_\mathcal{F} = \frac{1}{n-1} 
	\sum_{i =1}^n (\phi(x_i) - c_\mathcal{F}) 
	(\phi(x_i) - c_\mathcal{F})^\top
\end{equation*}
is thus a matrix of size at most 
$(n-1) \times (n-1)$.
Note that the covariance matrix is centered 
but the original kernel matrix is not. 
Therefore we construct 
the centered kernel matrix $\tilde{K}$ by
\begin{align} \label{eq:centerKh}
	\tilde{K}_{ij} &= 
	   \Big(\phi(x_i) - 
	   \frac{1}{n}\sum_{\ell = 1}^n
	   \phi(x_\ell)\Big)^\top
		 \Big(\phi(x_j) - 
	   \frac{1}{n}\sum_{\ell'=1}^n
	   \phi(x_{\ell'})\Big) \nonumber \\
	&= K_{ij} - \frac{1}{n}
	   \sum_{\ell = 1}^n K_{\ell j} - 
		 \frac{1}{n}
		 \sum_{\ell' = 1}^n K_{i\ell'} + 
		 \frac{1}{n^2}\sum_{\ell = 1}^n
	   \sum_{\ell' = 1}^n K_{\ell \ell'} 
		 \nonumber \\
	&= \Big(K - 1_{nn} K - K 1_{nn} +
	   1_{nn} K 1_{nn}\Big)_{ij}
\end{align}
where $1_{nn}$ is the $n\times n$ matrix with 
all entries set to $1/n$. 
Note that the centered kernel matrix is equal 
to $\tilde{K} =
 \tilde{\Phi} \tilde{\Phi}^\top$ with 
$\tilde{\Phi} = [\phi(x_1) - c_\mathcal{F},
 \ldots,\phi(x_n) - c_\mathcal{F}]^\top$
and is PSD by construction. 
The following result is due to 
\cite{scholkopf1998nonlinear}.

\begin{mythm} \label{thm:equalrank}
Given an $n \times p$ dataset $X$, the 
sorted eigenvalues of the covariance matrix 
$\hat{\Sigma}_\mF$ and those of the 
centered kernel matrix $\tilde{K}$ satisfy
\begin{equation*}
	\lambda_j^{\hat{\Sigma}_\mF} =
	\frac{\lambda_j^{\tilde{K}}}{n-1} 
\end{equation*}
for all $j=1, \ldots, m$ where
$m=\rank(\hSigma_\mathcal{F})$. 
\end{mythm}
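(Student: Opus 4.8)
The plan is to reduce the statement to the classical linear-algebra fact that for any matrix $A$ the two Gram matrices $A^\top A$ and $A A^\top$ carry the same nonzero eigenvalues, with the same multiplicities. First I would make this structure explicit. Writing $\tilde{\Phi} = [\phi(x_1) - c_\mathcal{F}, \ldots, \phi(x_n) - c_\mathcal{F}]^\top$ as in the text, the feature-space covariance is $\hat{\Sigma}_\mathcal{F} = \frac{1}{n-1}\tilde{\Phi}^\top\tilde{\Phi}$ while the centered kernel matrix is $\tilde{K} = \tilde{\Phi}\tilde{\Phi}^\top$. Setting $A := \tilde{\Phi}$ (viewed in coordinates of the subspace $\tilde{\mathcal{F}}$, so that $A$ is $n \times m$), the theorem becomes the assertion that the $j$-th largest eigenvalue of $\frac{1}{n-1}A^\top A$ equals $\frac{1}{n-1}$ times the $j$-th largest eigenvalue of $A A^\top$ for $j = 1, \ldots, m$.

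The core step is the eigenvalue correspondence. If $A A^\top v = \lambda v$ with $\lambda \neq 0$, then left-multiplying by $A^\top$ shows that $w := A^\top v$ satisfies $A^\top A\, w = \lambda w$, and $w \neq 0$ because $A^\top v = 0$ would force $\lambda v = A A^\top v = 0$. The symmetric argument ($v \mapsto A w$) runs in the opposite direction, so $\tilde{K}$ and $(n-1)\hat{\Sigma}_\mathcal{F}$ share exactly the same set of nonzero eigenvalues. The one point that deserves care is that this correspondence preserves multiplicities, i.e. that $v \mapsto A^\top v$ restricts to a linear isomorphism between the $\lambda$-eigenspaces; injectivity for $\lambda \neq 0$ is exactly the nonvanishing just noted, and surjectivity follows by running the map the other way. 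I would most cleanly dispatch this bookkeeping by invoking the singular value decomposition $A = U D V^\top$: then $A A^\top = U D D^\top U^\top$ and $A^\top A = V D^\top D V^\top$, so both matrices carry the squared singular values $d_1^2 \geqslant d_2^2 \geqslant \cdots$ as their nonzero eigenvalues, automatically with the correct multiplicity.

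Finally I would match up the counts and the ordering. Since $\rank(A A^\top) = \rank(A^\top A) = \rank(A) = m$ and $m = \rank(\hat{\Sigma}_\mathcal{F})$, the matrix $\hat{\Sigma}_\mathcal{F}$ has exactly $m$ nonzero eigenvalues, and $\tilde{K}$ likewise has $m$ nonzero eigenvalues, its remaining $n - m$ eigenvalues being zero. Writing the shared nonzero values in decreasing order as $d_1^2 \geqslant \cdots \geqslant d_m^2 > 0$ gives $\lambda_j^{\tilde{K}} = d_j^2$ and $\lambda_j^{\hat{\Sigma}_\mathcal{F}} = d_j^2/(n-1)$ for each $j = 1, \ldots, m$, which is the claimed identity; indexing only up to $m$ is precisely what lets us ignore the trailing zero eigenvalues of $\tilde{K}$. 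I do not expect a genuine obstacle here, as the result is standard; the only thing to be careful about is the multiplicity statement, which the SVD route handles transparently.
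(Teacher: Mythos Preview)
Your proof is correct and follows essentially the same route as the paper: both arguments hinge on left-multiplying the eigenvector equation $\tilde{\Phi}\tilde{\Phi}^\top v_j = \lambda_j v_j$ by $\tilde{\Phi}^\top$ to transport eigenvectors of $\tilde{K}$ to eigenvectors of $(n-1)\hat{\Sigma}_\mathcal{F}$. You are in fact more careful than the paper, which does not explicitly verify $\tilde{\Phi}^\top v_j \neq 0$ or address multiplicities; your SVD bookkeeping is a clean way to close that gap.
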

	
\begin{proof}
[Proof of Theorem \ref{thm:equalrank}]
The eigendecomposition of the centered 
kernel matrix $\tilde{K}$ is
\begin{equation*}
	\tilde{K} = 
	  \tilde{\Phi} \tilde{\Phi}^\top
		= V \Lambda V^\top 
\end{equation*}
where $\Lambda = \diag(\lambda_1,
 \ldots,\lambda_n)$ with 
$\lambda_1 \geqslant \ldots \geqslant 
\lambda_n$\,.
The eigenvalue $\lambda_j$ and 
eigenvector $v_j$ satisfy
\begin{equation*}
	\tilde{\Phi} \tilde{\Phi}^\top v_j
	= \lambda_j v_j 
\end{equation*}  
for all $j=1, \ldots, m$.
Multiplying both sides by 
$\tilde{\Phi}^\top/(n-1)$ gives
\begin{equation*}
		\left(\frac{1}{n-1} 
		\tilde{\Phi}^\top 
		\tilde{\Phi}\right) 
		(\tilde{\Phi}^\top v_j) = 
		\frac{\lambda_j}{n-1} 
		(\tilde{\Phi}^\top v_j).
\end{equation*} 	
Combining the above equations results in
\begin{equation*}
		\hat{\Sigma}_\mathcal{F} 
		v^{\hat{\Sigma}_\mathcal{F}}_j = 
		\frac{\lambda_j}{n-1} 
		v^{\hat{\Sigma}_\mathcal{F}}_j
\end{equation*} 
for all $j=1, \ldots, m$
where $v^{\hat{\Sigma}_\mathcal{F}}_j =
 (\tilde{\Phi}^\top v_j)$ is the $j$-th 
eigenvector of $\hat{\Sigma}_\mathcal{F}$. 
The remaining eigenvalues of the covariance 
matrix, if any, are equal to zero. 
\end{proof}
	
The above result can be related to a 
representer theorem for kernel 
PCA \cite{alzate2008kernel}. 
It shows that the nonzero eigenvalues of the 
covariance matrix are proportional to the 
nonzero eigenvalues of the centered kernel 
matrix, thus proving that 
$\hat{\Sigma}_\mathcal{F}$ and 
$\tilde{K}$ have the same rank.

What would a kernelized MCD estimator look like? 
It would have to be equivalent to applying the 
original MCD in the feature space, so that in
case of the linear kernel the original MCD is
obtained. The MCD estimate for location in 
the subspace $\tF$ is
\begin{equation*}\label{eq:centeringEQ}
 c_\mF^H = \frac{1}{h} \sum_{i \in H} \phi(x_i)
\end{equation*}
whereas the covariance matrix now equals
\begin{equation*}	\label{eq:kcov}
	\hSigma_\mF^H = \frac{1}{h-1}\sum_{i \in H}
	(\phi(x_i) - c_\mF^H) 
	(\phi(x_i) - c_\mF^H)^\top.
\end{equation*}
Likewise, the robust distance becomes
\begin{equation*} \label{eq:ksmd}
	\MD(\phi(x),c_\mF^H,\hSigma_\mF^H) =
	(\phi(x) - c_\mF^H)^\top 
	(\hSigma^{H}_\mF)^{-1} 
	(\phi(x) - c_\mathcal{F}^H)\;.
\end{equation*}	
In these formulas the mapping function $\phi$
may not be known, but that is not necessary 
since we can apply the kernel trick.
More importantly, the covariance matrix may
not be invertible as the $\phi(x_i) - c_\mF^H$
lie in a possibly high-dimensional space $\tF$. 
We therefore propose to apply MRCD in $\tF$ in 
order to make the covariance matrix invertible.  
Let $\tilde{\Phi}_H$ be the row-wise 
stacked matrix
\begin{equation*}
		\tilde{\Phi}_H = 
		[\phi(x_{i(1)}) - c_\mF^H,\ldots,
		 \phi(x_{i(h)}) - c_\mF^H]^\top
\end{equation*}	
where $i(1),\ldots,i(h)$ are the indices in $H$. 
For any $0 < \rho < 1$ the regularized 
covariance matrix is defined as
\begin{equation*}
	\label{eq:sigmareg}
	\hSigma^H_{\mathrm{reg}} = 
	(1-\rho)\hSigma^H_\mF + \rho I_m =
	\frac{1-\rho}{h-1} \tilde{\Phi}_H^\top 
	\tilde{\Phi}_H + \rho I_m
	\end{equation*}
where $I_m$ is the identity matrix 
in $\tF$. 
The KMRCD method is then defined as
\begin{equation}\label{eq:kmcdobjective}	
	\hSigma_{\mathrm{KMRCD}} = 
	\underset{H \in \mathcal{H}}
	\argmin \det(\hSigma_{\mathrm{reg}}^H)
\end{equation}	
where $\mathcal{H}$ is the collection of subsets 
$H$ of $\{1,\ldots,n \}$ such that $|H| = h$ 
and $\hat{\Sigma}^H$ is of maximal rank, i.e. 
$\rank(\hat{\Sigma}^H) = 
\mathrm{dim}(\mathrm{span}
  (\phi(x_{i(1)}) - c_\mF^H, \ldots,
	\phi(x_{i(h)}) - c_\mF^H)) = q$ 
with $q:= \min(m,h-1)$. 
We can equivalently say that the $h$-subset 
$H$ is in general position.
The corresponding regularized kernel matrix is
\begin{equation} \label{eq:egKernelMatrix}
	\tK_{\mathrm{reg}}^H = (1-\rho)\tK^H + 
	                       (h-1) \rho I_h
\end{equation}
where $\tilde{K}^H = \tilde{\Phi}_H
  \tilde{\Phi}_H^{\operatorname{T}}$
denotes the centered kernel matrix of $h$ rows,
that is, \eqref{eq:centerKh} with $n$ replaced 
by $h$. 
The MRCD method in feature space $\tF$ 
minimizes the determinant in 
\eqref{eq:kmcdobjective} in $\tF$.
But we would like to carry out an optimization 
on kernel matrices instead.
The following theorem shows that this is
possible.
	
\begin{mythm} \label{thm:equalobj}
Minimizing $\det(\hSigma^H_\mathrm{reg})$ over 
all subsets H in $\mathcal{H}$ is equivalent to 
minimizing $\det(\tK^H_\mathrm{reg})$ over all 
$h$-subsets $H$ with $\rank(\tK^H) = q.$
		
\begin{proof}
[Proof of Theorem \ref{thm:equalobj}]
From Theorem \ref{thm:equalrank} it follows 
that the nonzero eigenvalues of 
$\hat{\Sigma}^H_\mathcal{F}$ and $\tilde{K}^H$ 
are related by 
$\lambda_j^{\hat{\Sigma}_\mathcal{F}^H} =
 \frac{1}{h-1}\lambda_j^{\tilde{K}^H}$. 
If $H$ belongs to $\mathcal{H}$, 
$\hat{\Sigma}_\mathcal{F}^H$ has exactly $q$ 
nonzero eigenvalues so $\tilde{K}^H$ also has 
rank $q$, and vice versa. The remaining 
$m-q$ eigenvalues of $\hat{\Sigma}^H$ are zero, 
as well as the remaining $h-q$ eigenvalues of 
$\tilde{K}^H$. 
Now consider the regularized matrices
\begin{equation*}
	\hSigma^H_\mathrm{reg} = 
	(1-\rho) \hSigma_\mF^H + \rho I_m
\end{equation*}
and
\begin{equation*}
	\tK^H_\mathrm{reg} = 
	(1-\rho) \tK^H + (h-1)\rho I_h\;.
\end{equation*}
Computing the determinant of both matrices as 
a product of their eigenvalues yields:
\begin{equation*}
	\det(\hSigma^H_\mathrm{reg}) = 
	\rho^{m-q} \prod_{j=1}^q ((1-\rho)
	\lambda_j^{\hat{\Sigma}^H_\mathcal{F}}
	+ \rho)
\end{equation*}
and
\begin{align*}
	\det(\tK^H_\mathrm{reg}) &= 
	\rho^{h-q} \prod_{j=1}^q ((1-\rho)
	\lambda_j^{\tK^H} + (h-1)\rho) \\
	&= \rho^{h-q} \prod_{j=1}^q (h-1)((1-\rho)
	  \lambda_j^{\hSigma_\mF^H} + \rho)\\
	&= \frac{\rho^{h-q}}{\rho^{m-q}} (h-1)^q 
	   \det(\hSigma^H_\mathrm{reg})\;.
\end{align*}
Therefore	
$\mathrm{det}(\tilde{K}^H_\mathrm{reg}) = 
\rho^{h-m} (h-1)^q 
\det(\hat{\Sigma}^H_\mathrm{reg})$ in which 
the proportionality factor is constant,
so the optimizations are equivalent.
\end{proof}
\end{mythm}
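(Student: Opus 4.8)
The plan is to express both regularized determinants as products of eigenvalues and then invoke the eigenvalue correspondence of Theorem \ref{thm:equalrank} to show that the two objective functions differ only by a positive multiplicative constant that is the same for every admissible subset $H$. Before comparing the determinants I would first check that the two minimizations range over the same feasible set. Applying Theorem \ref{thm:equalrank} to the $h$-subset $H$ gives a bijection between the nonzero eigenvalues of $\hSigma_\mF^H$ and those of $\tK^H$ through $\lambda_j^{\hSigma_\mF^H} = \lambda_j^{\tK^H}/(h-1)$, so $\rank(\hSigma_\mF^H)=q$ exactly when $\rank(\tK^H)=q$. Hence $H\in\mathcal{H}$ if and only if $\rank(\tK^H)=q$, and the two optimizations are over the identical collection of subsets.

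Next I would compute each determinant. Since $\hSigma^H_\mathrm{reg}=(1-\rho)\hSigma_\mF^H+\rho I_m$ is a shift of $\hSigma_\mF^H$ along its own eigenbasis, its eigenvalues are $(1-\rho)\lambda_j^{\hSigma_\mF^H}+\rho$ on the $q$ nonzero directions and $\rho$ on the remaining $m-q$ null directions, so
\begin{equation*}
  \det(\hSigma^H_\mathrm{reg}) = \rho^{\,m-q}\prod_{j=1}^{q}\big((1-\rho)\lambda_j^{\hSigma_\mF^H}+\rho\big).
\end{equation*}
Analogously $\tK^H_\mathrm{reg}=(1-\rho)\tK^H+(h-1)\rho I_h$ has eigenvalues $(1-\rho)\lambda_j^{\tK^H}+(h-1)\rho$ on the $q$ nonzero directions and $(h-1)\rho$ on the other $h-q$, giving a determinant that factors in the same shape.

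Then I would substitute $\lambda_j^{\tK^H}=(h-1)\lambda_j^{\hSigma_\mF^H}$ into each nonzero factor of $\det(\tK^H_\mathrm{reg})$; this lets me pull a common $(h-1)$ out of every one of the $q$ factors, rewriting each as $(h-1)\big((1-\rho)\lambda_j^{\hSigma_\mF^H}+\rho\big)$. Collecting the contributions, the power of $\rho$ coming from the null directions, the factor $(h-1)^{q}$ from the nonzero directions, and the leftover regularization constants combine into a single factor $c>0$, yielding $\det(\tK^H_\mathrm{reg})=c\,\det(\hSigma^H_\mathrm{reg})$. The step I expect to demand the most care is verifying that $c$ is genuinely independent of $H$: this rests on the fact that the common rank $q=\min(m,h-1)$, and therefore the multiplicities $m-q$ and $h-q$ of the pure-regularization eigenvalues, are the same for every $H$ in the feasible set—precisely because we restrict to subsets in general position. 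Once that is established, $c$ depends only on the fixed quantities $\rho$, $h$, $m$ and $q$, so the two determinants attain their minima at the same subset and the objectives are equivalent.
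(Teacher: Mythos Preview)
Your proof is correct and follows essentially the same approach as the paper: both invoke Theorem~\ref{thm:equalrank} to match up the nonzero eigenvalues, write each regularized determinant as a product over its spectrum, substitute $\lambda_j^{\tK^H}=(h-1)\lambda_j^{\hSigma_\mF^H}$, and conclude that the two determinants differ by a positive factor depending only on $\rho$, $h$, $m$, and $q$. Your explicit remark that the null-direction eigenvalues of $\tK^H_\mathrm{reg}$ equal $(h-1)\rho$ and your emphasis on why $q$ (and hence the constant) is the same for every admissible $H$ are, if anything, slightly more careful than the paper's presentation.
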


Following \cite{haasdonk2009classification}
we can also express the robust Mahalanobis 
distance in terms of the regularized kernel 
matrix, by 
\begin{align}\label{eq:regMahal}
  \MD(\phi(x),c_\mF^H,
	    \hSigma_{\mathrm{reg}}^H) 
	&= \sqrt{(\phi(x) - c_\mF^H)^\top
		 (\hSigma_{\mathrm{reg}}^H)^{-1}
		 (\phi(x) - c_\mF^H)} \nonumber \\
	&= \sqrt{\frac{1}{\rho}\left(
	   \tilde{k}(x,x)-(1-\rho)\tilde{k}(H,x)^\top 
		 (\tilde{K}_{\mathrm{reg}}^H)^{-1} 
		 \tilde{k}(H,x)\right)} 	
\end{align}	
where $\tilde{k}(x,x) = (\phi(x) - c_\mF^H)^\top
(\phi(x) - c_\mF^H)$ is a special case of the
formula 
$\tilde{k}(x,y) = k(x,y) 
  - \sum_{i \in H} k(x_i,x)
 	- \sum_{i \in H} k(x_i,y)
  - \sum_{i \in H} \sum_{j \in H} k(x_i,x_j)$
for $x=y$. The notation
$\tilde{k}(H,x)$ stands for the column vector
$\tilde{\Phi}_H (\phi(x) - c_\mF^H) = 
[\tilde{k}(x_{i(1)},x),	\ldots , 
 \tilde{k}(x_{i(h)},x)]^\top$
in which $i(1), \ldots, i(h)$ are the members
of $H$.
This allows us to calculate the Mahalanobis 
distance in feature space from the kernel 
matrix, and consequently to perform the 
C-step procedure on it. 
Note that \eqref{eq:regMahal} requires
inverting the matrix
$\tilde{K}_{\mathrm{reg}}^H$ instead of
the matrix
$\hSigma_{\mathrm{reg}}^H$.
		
The C-step theorem of the MRCD in
\cite{boudt2020minimum} shows that when you
select a new $h$-subset as those $i$ for which
the Mahalanobis distance relative to the old
$h$-subset is smallest, the regularized
covariance determinant of the new $h$-subset
is lower than or equal to that of the old one.
In other words, C-steps lower the objective 
function of MRCD. 
Using Theorem \ref{thm:equalobj}, this 
C-step theorem thus also extends to the kernel 
MRCD estimator.

\section{The Kernel MRCD Algorithm} 
\label{sec:algo}

This section introduces the elements of the 
kernel MRCD algorithm.
If the original data comes in the form of
an $n \times p$ dataset $X$, we start by 
robustly standardizing it.
For this we first compute the univariate 
reweighted MCD estimator of 
\cite{Rousseeuw:RobReg} with coverage 
$h = [n/2] + 1$ to obtain estimates of the 
location and scatter of each variable, which 
are then used to transform $X$ to $z$-scores.
The kernel matrix $K$ is then computed
from these $z$-scores.
\blue{Note, however, that the data can come 
in the form of a kernel matrix that was not
derived from data points with coordinates.
For instance, a so-called string kernel
can compute similarities between texts, such 
as emails, without any variables or
measurements. Such a kernel basically 
compares the occurrence of strings of 
consecutive letters in each text.}
Since the KMRCD method does all its 
computations on the kernel matrix, it can 
also be applied to such data.

\subsection{Initial estimates}
\label{subsec:initialestimates}	

The MRCD estimator needs initial
$h$-subsets to start C-steps from.
In the original FastMCD algorithm of
\cite{rousseeuw1999fast} the initial 
$h$-subsets were obtained by drawing
random $(p + 1)$-subsets out of the $n$
data points. For each its empirical mean 
and covariance matrix were computed as 
well as the resulting Mahalanobis distances
of all points, after which the subset with 
the $h$ smallest distances was obtained.
However, this procedure would not be 
possible in situations where $p > n$
because Mahalanobis distances require
the covariance matrix to be invertible.
The MRCD method instead starts from a
small number of other initial estimators, 
inherited from the DetMCD algorithm in 
\cite{hubert2012deterministic}.

For the initial $h$-subsets in KMRCD we
need methods that can be kernelized.
We propose to use four such initial 
estimators, the combination of which has 
a good chance of being robust against
different contamination types. 
Since initial estimators can be inaccurate, 
a kernelized refinement step will be 
applied to each. 
We will describe these methods in turn.

The first initial method is based on the 
concept of spatial median.
For data with coordinates, the spatial 
median is defined as the point $m$ that 
has the lowest total Euclidean distance
$\sum_i ||x_i-m||$ to the data points.
This notion also makes sense in the kernel
context, since Euclidean distances in the
feature space can be written in terms of
the inner products that make up the kernel
matrix. 
\blue{The spatial median in coordinate 
space is often computed by the Weiszfeld
algorithm and its extensions, see
e.g. \cite{vardi2000L1}.
A kernel algorithm for the spatial median
was provided in
\cite{debruyne2010detecting}.
It writes the spatial median $m_\mathcal{F}$
in feature space as} a convex 
combination of the $\phi(x_i)$:
\begin{equation*}
	m_\mF = \sum_{i=1}^{n} 
	   \gamma_i \phi(x_i) 
	\label{eq:spatialmedian}
\end{equation*}
in which the coefficients 
$\gamma_1, \dots, \gamma_n$ are unknown.
The Euclidean distance of each observation 
to $m_\mF$ is computed as the
square root of
\begin{align} \label{eq:edistance}
	||\phi(x_i) - m_\mF||^2 
	&= || \phi(x_i) - \sum_{j=1}^n 
	    \gamma_j \phi(x_j)||^2 \nonumber \\
	&= || \phi(x_i)||^2 + || \sum_{j=1}^n 
	   \gamma_j \phi(x_j)||^2 - 
		 2 \langle \phi(x_i),
		 \sum_{j=1}^n \gamma_j \phi(x_j)\rangle
		 \nonumber \\ 
	&= k(x_i,x_i) + \sum_{j=1}^n \sum_{\ell=1}^n 
	   \gamma_j \gamma_\ell k(x_j,x_\ell) - 
		 2 \sum_{j=1}^n\gamma_j k(x_i,x_j)
\end{align}
and the coefficients $\gamma_1,\dots,\gamma_n$ 
that minimize $\sum_i ||\phi(x_i)-m_\mF||$
are obtained by an iterative procedure
described in Algorithm 2 in 
Section A.1 
of the Supplementary Material.
The first initial $h$-subset $H$ is then given
by the objects with the $h$ lowest values of
\eqref{eq:edistance}. 
Alternatively, $H$ is described by a 
weight vector $w = (w_1,\ldots,w_n)$ of
length $n$, where 
\begin{equation}
\label{eq:hardReject}
		w_i := \begin{cases}
		1 \quad & \text{if } i\in H \\
		0 \quad &\text{otherwise}.
\end{cases}
\end{equation}
The initial location estimate $c_\mF$ in 
feature space is then the weighted mean 
\begin{equation} \label{eq:wsumloc}
	c_\mathcal{F} = \frac{\sum_{i=1}^n 
	w_i\phi(x_i)}{\sum_{i=1}^n\! w_i} \;.
\end{equation} 
\blue{The initial covariance estimate 
$\hSigma_\mF$ is the weighted covariance 
matrix 
\begin{equation} \label{eq:wsumscatter}
	\hSigma_\mF = \frac{1}{\sum_{i=1}^n\! u_i}
	\, \tilde{\Phi}^\top 
	\mathrm{diag}(u_1,\ldots,u_n)\,\tilde{\Phi}
\end{equation}
given by covariance weights 
$(u_1,\ldots,u_n)$ that in general may 
differ from the location weights 
$(w_1,\ldots,w_n)$. But for the spatial 
median initial estimator one simply 
takes $u_i := w_i$ for all $i$.}
	
The second initial estimator is based on the 
Stahel-Donoho outlyingness (SDO) of
\cite{stahel1981breakdown,donoho1982breakdown}. 
In a space with coordinates it involves
projecting the data points on many unit
length vectors (directions).
We compute the kernelized SDO  
\cite{debruyne2010robust} of all observations
and determine an $h$-subset as the indices
of the $h$ points with lowest outlyingness.
This is then converted to weights $w_i$
as in \eqref{eq:hardReject}, \blue{and we 
put $u_i := w_i$ again}. The entire
procedure is listed as Algorithm 3
in the Supplementary Material.
	
The third initial $h$-subset is based on 
spatial ranks \cite{debruyne2009robustified}. 
The spatial rank of $\phi(x_i)$ with respect 
to the other feature vectors is defined as:
\begin{align} \label{eq::srank}
	R_i 
  &= \frac{1}{n}\left\|\sum_{j \neq i} 
	   \frac{\phi\left(x_{i}\right)-
	   \phi\left(x_{j}\right)}
	   {\left\|\phi\left(x_{i}\right)-
	   \phi\left(x_{j}\right)\right\|}\right\|
	   \nonumber \\
	&= \frac{1}{n}\left[\left(\sum_{j\neq i} 
	   \frac{\phi\left(x_{i}\right)-
	   \phi\left(x_{j}\right)}
	   {\left\|\phi\left(x_{i}\right)-
	   \phi\left(x_{j}\right)\right\|}\right)
	   ^\top
	   \left(\sum_{\ell\neq i} 
	   \frac{\phi\left(x_{i}\right)-
	   \phi\left(x_\ell \right)}
	   {\left\|\phi\left(x_{i}\right)-
	   \phi\left(x_\ell \right)\right\|}\right)
		 \right]^{\frac{1}{2}} \nonumber \\ 
	&= \frac{1}{n}\left[\sum_{j \neq i} 
	   \sum_{\ell \neq i} 
	   \frac{k(x_{i}, x_{i})-k(x_{i}, x_{j})-
	   k(x_{i}, x_\ell) + k(x_{j}, x_\ell)}
		 {\alpha (x_i,x_j) \alpha (x_i,x_\ell)}
		 \right]^{\frac{1}{2}} 
\end{align}
where $\alpha (x_i,x_j) = [k(x_{i}, x_{i})+
k(x_{j}, x_{j})-
2 k(x_{i}, x_{j})]^{\frac{1}{2}}$\;. 
If $R_i$ is large, this indicates that 
$\phi(x_i)$ lies further away from the bulk 
of the data than most other feature 
vectors. In this sense, the 
values $R_i$ represent a different measure 
of the outlyingness of $\phi(x_i)$ in the 
feature space. We then consider the $h$ 
lowest spatial ranks, \blue{yielding the 
location weights $w_i$ by 
\eqref{eq:hardReject}, and put 
$u_i := w_i$\;}. The complete procedure is 
Algorithm 4 
in the Supplementary Material.
\blue{Note that this algorithm is closely
related to the depth computation
in~\cite{chen2009spatial} which 
appeared in the same year 
as~\cite{debruyne2009robustified}.}

The last initial estimator is a 
generalization of the spatial sign 
covariance matrix \cite{visuri2000sign} 
(SSCM) to the feature space $\mF$. 
For data with coordinates, one first 
computes the spatial median $m$ 
described above.
The SSCM then carries out a radial 
transform which moves all data points 
to a sphere around $m$, followed by 
computing the classical product 
moment of the transformed data:
\begin{equation*}
	\hat{\Sigma}^{\mathrm{SSCM}} = 
	\frac{1}{n-1} \sum_{i=1}^{n} 
	\frac{\left(x_i-m\right)}{||x_i-m||}
	\frac{\left(x_i-m\right)^\top}
  	{||x_i-m||}\;.
	\end{equation*}
The kernel spatial sign covariance 
matrix \cite{debruyne2010detecting}
is defined in the same way, by
replacing $x_i$ by $\phi(x_i)$ and
$m$ by $m_\mF = \sum_{i=1}^{n} 
 \gamma_i \phi(x_i)$.
We now have two sets of weights. 
\blue{For location we use the weights 
$w_i = \gamma_i$ of the spatial
median and apply \eqref{eq:wsumloc}.
But for the covariance matrix we 
compute the weights 
$u_i = 1/||\phi(x_i) - m_\mF||$ with
the denominator given by 
\eqref{eq:edistance}.
Next, we apply \eqref{eq:wsumscatter}
with these $u_i$\;.}
The entire kernel SSCM procedure is 
listed as Algorithm 5 
in the Supplementary Material.
Note that kernel SSCM uses continuous
weights instead of zero-one weights. 

\subsection{The refinement step}
\label{subsec:refinement}	

It happens that the eigenvalues of initial
covariance estimators are inaccurate.
In \cite{maronna2002robust} this was
addressed by re-estimating the eigenvalues,
and \cite{hubert2012deterministic}
carried out this refinement step for all
initial estimates used in that paper.
In order to employ a refinement step in
KMRCD we need to be able to kernelize it.
We will derive the equations for the 
general case of a location estimator 
given by a weighted sum 
\eqref{eq:wsumloc} and a scatter matrix 
estimate given by a weighted covariance 
matrix \eqref{eq:wsumscatter} so it can
be applied to all four initial estimates.
We proceed in four steps.
\begin{enumerate}
\item The first step consists of 
  projecting the uncentered data on the 
	eigenvectors $V_\mF$ of the initial 
	scatter estimate $\hSigma_\mF$:
\begin{equation}
\label{eq:refin_B}
	B = \Phi V_\mF = 
	\Phi \tilde{\Phi}^\top D^{\frac{1}{2}}V
	= (K - K w 1_n^\top)D^{\frac{1}{2}}V,
\end{equation}
  where $D = \mathrm{diag}(u_1,\ldots,u_n)/ 
	(\sum_{i=1}^n\! u_i)$, 
	$1_{n} = [1,\ldots,1]^\top$, and 
  $V_\mF = \tilde{\Phi}^\top 
  D^{\frac{1}{2}}V$ with $V$ the 
  normalized eigenvectors of the weighted 
  centered kernel matrix $\hat{K} = 
  (D^{\frac{1}{2}}\tilde{\Phi})
  (D^{\frac{1}{2}}\tilde{\Phi})^\top =  
  D^{\frac{1}{2}}\tilde{K}D^{\frac{1}{2}}$.
\vspace{0.5em}
\item Next, the covariance matrix is 
  re-estimated by
\begin{equation*}
\label{eq:refin_Cov}
	\Sigma^{*}_\mF = 
	V_\mF L V_\mF^\top = 
	\tilde{\Phi}^\top D^{\frac{1}{2}}V L
	V^\top D^{\frac{1}{2}}\tilde{\Phi}\;,
\end{equation*}
  where 
  $L=\diag(Q_{n}^{2}\left(B_{.1}\right), 
  \ldots, Q_{n}^{2}\left(B_{.n}\right))$ in
  which $Q_{n}$ is the scale estimator of 
  Rousseeuw and Croux 
  \cite{rousseeuw1993alternatives} and
	$B_{.j}$ is the $j$-th column of $B$.
\vspace{0.5em}
\item The center is also re-estimated, by
\begin{equation*}
		c^*_\mF = (\Sigma_\mF^*)^{\frac{1}{2}} 
		\med(\Phi (\Sigma_\mF^*)^{-\frac{1}{2}})
\end{equation*}
  where $\med$ stands for the spatial median. 
  This corresponds to using a modified 
  feature map $\phi^{*}(x) = 
  \phi(x)(\Sigma_\mF^*)^{-\frac{1}{2}}$ for 
  the spatial median or running 
	Algorithm 2 
	with the modified kernel matrix 
\begin{align} \label{eq:refin_Kadj}
	K^* &= 	\Phi \tilde{\Phi}^\top 
	D^{\frac{1}{2}}V L^{-1} V^\top 
	D^{\frac{1}{2}}\tilde{\Phi} \Phi^\top 
	\nonumber\\ 
	&= (K - K w 1_n^\top) D^{\frac{1}{2}}
	    V L^{-1}D^{\frac{1}{2}}
			(K -  K w 1_n^\top)^\top.		
\end{align}
  Transforming the spatial median gives us 
	the desired center:
\begin{equation*}	\label{eq:refin_loc}
	c^*_\mF = (\Sigma^*_\mF)^{\frac{1}{2}} 
	\sum_{i=1}^n (\Sigma^*_\mF)^{-\frac{1}{2}} 
	\gamma^*_i \phi(x_i)
	= \sum_{i=1}^n \gamma^*_i \phi(x_i),
\end{equation*} 
  where $\gamma^*_i$ are the weights of 
  the spatial median for the modified kernel 
  matrix. 
\vspace{0.5em}	
\item The kernel Mahalanobis distance is 
  calculated as
\begin{align} \label{eq:refin_dist}		
	d^{*}_\mF(x) 
	&= (\phi(x) - c^*_\mF)^\top \, 
	   (\Sigma^{*}_\mF)^{-1} \, 
		 (\phi(x) - c^*_\mF) \nonumber \\ 
	&= (\phi(x) - c^*_\mF)^\top \, 
	   \tilde{\Phi}^\top D^{\frac{1}{2}}V 
		 L^{-1} V^\top D^{\frac{1}{2}}
		 \tilde{\Phi} \, 
		 (\phi(x) - c^*_\mF) \nonumber \\
	&= k^*(x,X) D^{\frac{1}{2}} V L^{-1} 
	   V^\top D^{\frac{1}{2}}{k^*(x,X)}^\top
\end{align}
	with
\begin{equation*}
\begin{aligned}
	k^*(x,X) &= k(x,X) - \sum_{i=1}^{n} 
	  w_i k(x,x_i) 1_n^\top \\ 
	&\quad - \sum_{j=1}^{n} 
	  \gamma_j^* k(x_j,X) - \sum_{i=1}^{n} 
		\sum_{j=1}^{n} w_i 
		\gamma_j^* k(x_i,x_j) 1_n^\top
\end{aligned}
\end{equation*}
	where $k(x,X) = [k(x,x_1),\ldots,k(x,x_n)]$.
\end{enumerate}
The $h$ points with the smallest 
$d^{*}_\mF(x)$ form the refined $h$-subset. 
The entire procedure is Algorithm 6 
in the Supplementary Material.

\subsection{Kernel MRCD algorithm} 
\label{subsec:kmcdalgorithm}

We now have all the elements to compute the 
kernel MRCD by Algorithm \ref{algo:rKMRCD}. 
Given any PSD kernel matrix and subset size 
$h$, the algorithm starts by computing the
four initial estimators described in 
Section \ref{subsec:initialestimates}. 
Each initial estimate is then refined 
according to Section \ref{subsec:refinement}. 
Next, kernel MRCD computes the 
regularization parameter $\rho$. 
This is done with a kernelized version of 
the procedure in \cite{boudt2020minimum}.
For each initial estimate we choose $\rho$ 
such that the regularized kernel matrix 
$\tilde{K}^H_\mathrm{reg}$ of
\eqref{eq:egKernelMatrix} is 
well-conditioned.
If we denote by $\lambda$ the vector 
containing the eigenvalues of the centered 
kernel matrix $\tilde{K}^H$, the 
condition number of 
$\tilde{K}^H_\mathrm{reg}$ is
\begin{equation}
\label{eq:conditionNumber}
	\kappa(\rho) = \frac{(h-1)\rho +
	(1-\rho)\max(\lambda)}
	{(h-1)\rho + (1-\rho)\min(\lambda)}
\end{equation}
and we choose $\rho$ such that 
$\kappa(\rho) \leqslant 50$.
\blue{(Section A.3 in the supplementary
material contains a simulation study
supporting this choice.)}
Finally, kernel C-steps are applied until
convergence, where we monitor the 
objective function of Section 
\ref{subsec:kmcdintro}. 

\begin{algorithm}[!ht]
\caption{Kernel MRCD.}
\label{algo:rKMRCD}
\begin{enumerate}
\item \textbf{Input}: kernel matrix $K$,  
  subset size $h$.
\item Compute the weights of the four
  initial estimates of location and 
	scatter as in Section 
	\ref{subsec:initialestimates}.
\item Refine each initial estimate as
  in Section 
	\ref{subsec:refinement}.
\item For each resulting subset, 
  determine $\rho^{(i)}$ such that 
	$\kappa(\rho^{(i)}) \leqslant 50$.
\item Determine the final $\rho$ as
 in \cite{boudt2020minimum}: if 
  $\max_i \rho^{(i)} \leqslant 0.1$
	set $\rho = \max_i \rho^{(i)}$, 
	otherwise set $\rho = \max \left
	(0.1,\med_i(\rho^{(i)})\right)$.
\item For $H = H^{(1)}, \dots, H^{(4)}$ 
  perform C-steps as follows:	
  \begin{enumerate}
	\item Compute the regularized kernel 
	  matrix $\tilde{K}^H_\mathrm{reg}$ for 
		the $h$-subset $H$ from
		\eqref{eq:egKernelMatrix}.
	\item Calculate the regularized 
	  Mahalanobis distance for each 
		observation	$i$ by \eqref{eq:regMahal}.
	\item Redefine $H$ as the $h$ indices 
		$i$ with smallest distance.
	\item Compute and store the objective. 
	  If not converged, go back to (a).  
	\end{enumerate}
\item Select the $h$-subset with the 
  overall smallest objective function.
\item \textbf{Output}: the final 
  $h$-subset and the robust distances.
\end{enumerate}
\end{algorithm} 

In the special case where the linear
kernel is used, the centered kernel matrix 
$\tilde{K}^H$ immediately yields the 
regularized covariance matrix 
$\hSigma_\mathrm{reg}^H$ through
\begin{equation*}\label{eq:sigmaregfromh}
	\hSigma_\mathrm{reg}^H = 
	\frac{1-\rho}{h-1}(\tilde{X}^H)^\top
	\tilde{V} \Lambda \tilde{V}^\top 
	\tilde{X}^H + \rho I_h
\end{equation*}
where $\tilde{X}^H = X^H - \frac{1}{h}
\sum_{i \in H} x_i$ is the centered 
matrix of the observations in $H$ and 
$\Lambda$ and $\tilde{V}$ contain the 
eigenvalues and normalized eigenvectors
of $\tilde{K}^H$. 
(The derivation is given in Section A.2.)
So instead of applying MRCD to 
coordinate data we can also run KMRCD 
with a linear kernel and transform 
$\tilde{K}^H$ to 
$\hSigma_\mathrm{reg}^H$ afterward.
This computation is faster when the
data has more dimensions than cases.

\subsection{Anomaly detection by KMRCD}
\label{subsec:anomaly}

Mahalanobis distances (MD) relative to 
robust estimates of location and scatter 
are very useful to flag outliers, because
outlying points $i$ tend to have higher
$\MD_i$ values.
The standard way to detect outliers by 
means of the MCD in low dimensional data 
is to compare the robust distances to a 
cutoff that is the square root of a 
quantile of the chi-squared distribution 
with degrees of freedom equal to the 
data dimension
\cite{rousseeuw1999fast}.
However, in high dimensions the distribution
of the squared robust distances is no longer
approximately chi-squared, which makes it
harder to determine a suitable cutoff value.
Faced with a similar problem 
\cite{rousseeuw2018measure} introduced a 
different approach, based on the empirical
observation that robust distances of the 
non-outliers in higher dimensional data 
tend to have a distribution that is 
roughly similar to 
a lognormal.
They first transform the distances
$\MD_i$ to $\LD_i = \log(0.1 + \MD_i)$,
where the term $0.1$ prevents numerical
problems should a (near-)zero $\MD_i$ occur.
The location and spread of the non-outlying 
$\LD_i$ are then estimated by 
$\hmu_\mathrm{MCD}$ and 
$\hsigma_\mathrm{MCD}$, the results of 
applying the univariate MCD to all $\LD_i$
using the same $h$ as in the KMCRD method
itself.
Data point $i$ is then flagged iff
\begin{equation*}
	\frac{\LD_{i}-\hmu_\mathrm{MCD}(\LD)}
	{\hsigma_\mathrm{MCD}(\LD)}
	> z(0.995)
	\end{equation*}
where $z(0.995)$ is the 0.995 quantile of
the standard normal distribution.
The cutoff value for the untransformed
robust distances is thus
\begin{equation} \label{eq:finalCutoff}
	c=\exp \left(\hmu_\mathrm{MCD}(\LD)+
	  z(0.995)\hsigma_\mathrm{MCD}(\LD) 
		\right)-0.1\;. 
\end{equation}
The user may want to try different values 
of $h$ to be used in both the KMRCD method
itself as well as in the $\hmu_\mathrm{MCD}$ 
and $\hsigma_\mathrm{MCD}$ in
\eqref{eq:finalCutoff}.
One typically starts with a rather low
value of $h$, say $h=0.5n$ when the linear
kernel is used and there are up to 10
dimensions, and $h=0.75n$ in all other
situations.
This will provide an idea about the number
of outliers in the data, after which it is 
recommended to choose $h$ as high as 
possible provided $n-h$ exceeds the number
of outliers.
This will improve the accuracy of the
estimates.

\subsection{Choice of bandwidth}
\label{sec:bandwidth}	
	 
A commonly used kernel function is the 
radial basis function (RBF) 
$k(x,y) = e^{-\|x-y\|^2/(2\sigma^2)}$
which contains a tuning constant
$\sigma$ that needs to be chosen.
When the downstream learning task is
classification $\sigma$ is commonly 
selected by cross validation, where it 
is assumed that the data has no outliers
or they have already been removed.
However, in our unsupervised outlier 
detection context there is nothing to
cross validate. 
Therefore, we will use the so-called
median heuristic \cite{gretton2012kernel}
given by
\begin{equation} \label{eq:heuristic}
  \sigma^2 = \med \{
	\|x_{i}-x_{j}\|^2\;;\; 
	1 \leqslant i<j \leqslant n\}
\end{equation}
in which the $x_i$ are the standardized
data in the original space.
We will use this $\sigma$ in all our 
examples.

\subsection{Illustration on toy examples} 	
\label{sec:toy}	
	
We illustrate the proposed KMRCD 
method on the two toy examples in 
Figure \ref{fig:toy1}.
Both datasets consist of $n=1000$ 
bivariate observations.
The elliptical dataset in the left panel 
was generated from a bivariate Gaussian
distribution, plus 20\% of outliers.
The non-elliptical dataset in the panel on
the right is frequently used to demonstrate 
kernel methods \cite{suykens2002least}. 
This dataset also contains $20\%$ of 
outliers, which are shown in red and form 
the outer curved shape.
We first apply the non-kernel MCD method,
which does not transform the data, with
$h=\floor{0.75n}$. (Not using a kernel is
equivalent to using the linear kernel.)
The results are in 
Figure \ref{fig:toylinear}.
In the panel on the left this works well 
because the MCD method was developed for 
data of which the majority has a roughly 
elliptical shape.
For the same reason it does not work well 
on the non-elliptical data in the right
hand panel.
\begin{figure}[ht]
\centering
\includegraphics[width=.45\textwidth]
  {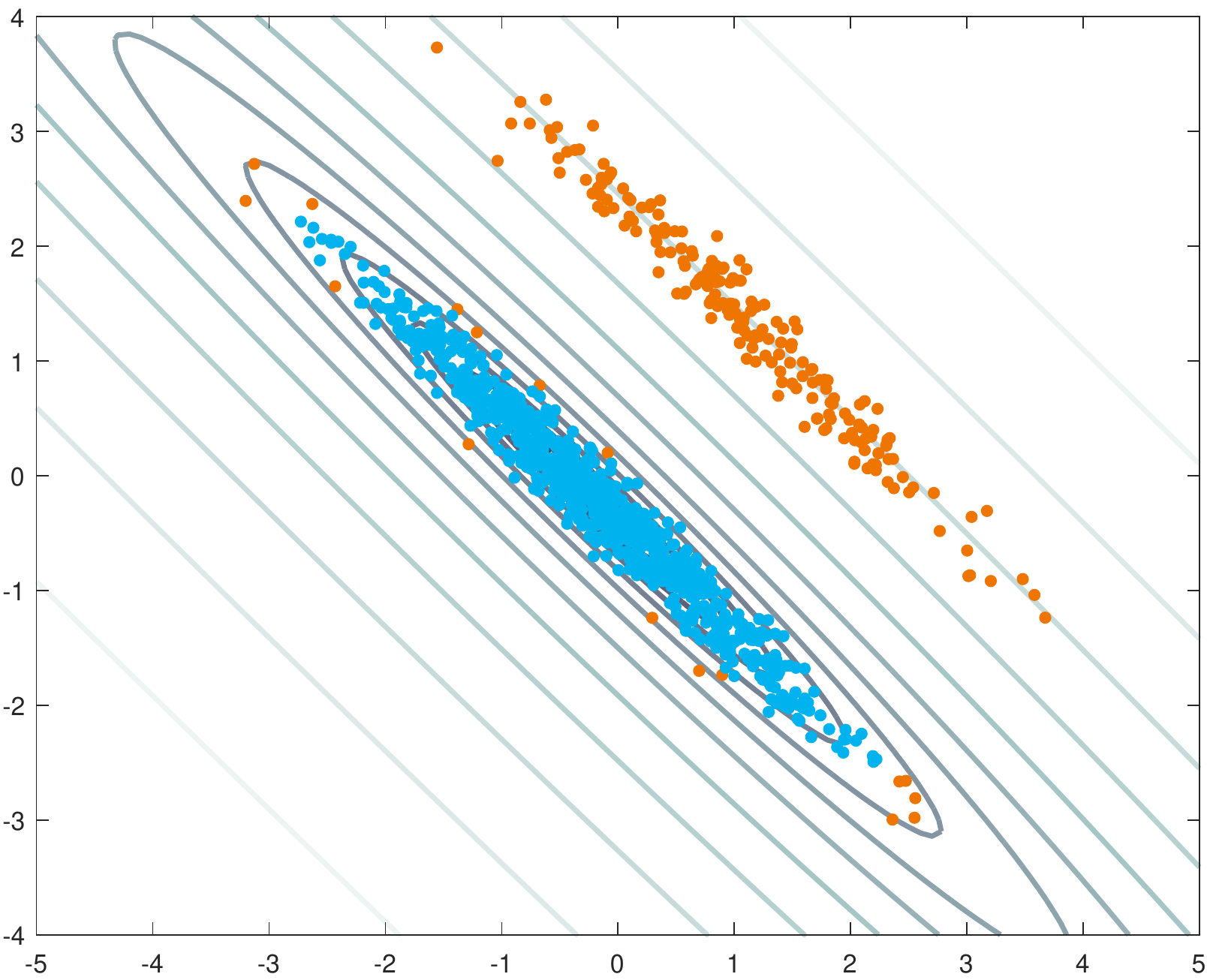}
\hspace{0.5cm}
\includegraphics[width=.45\textwidth]
  {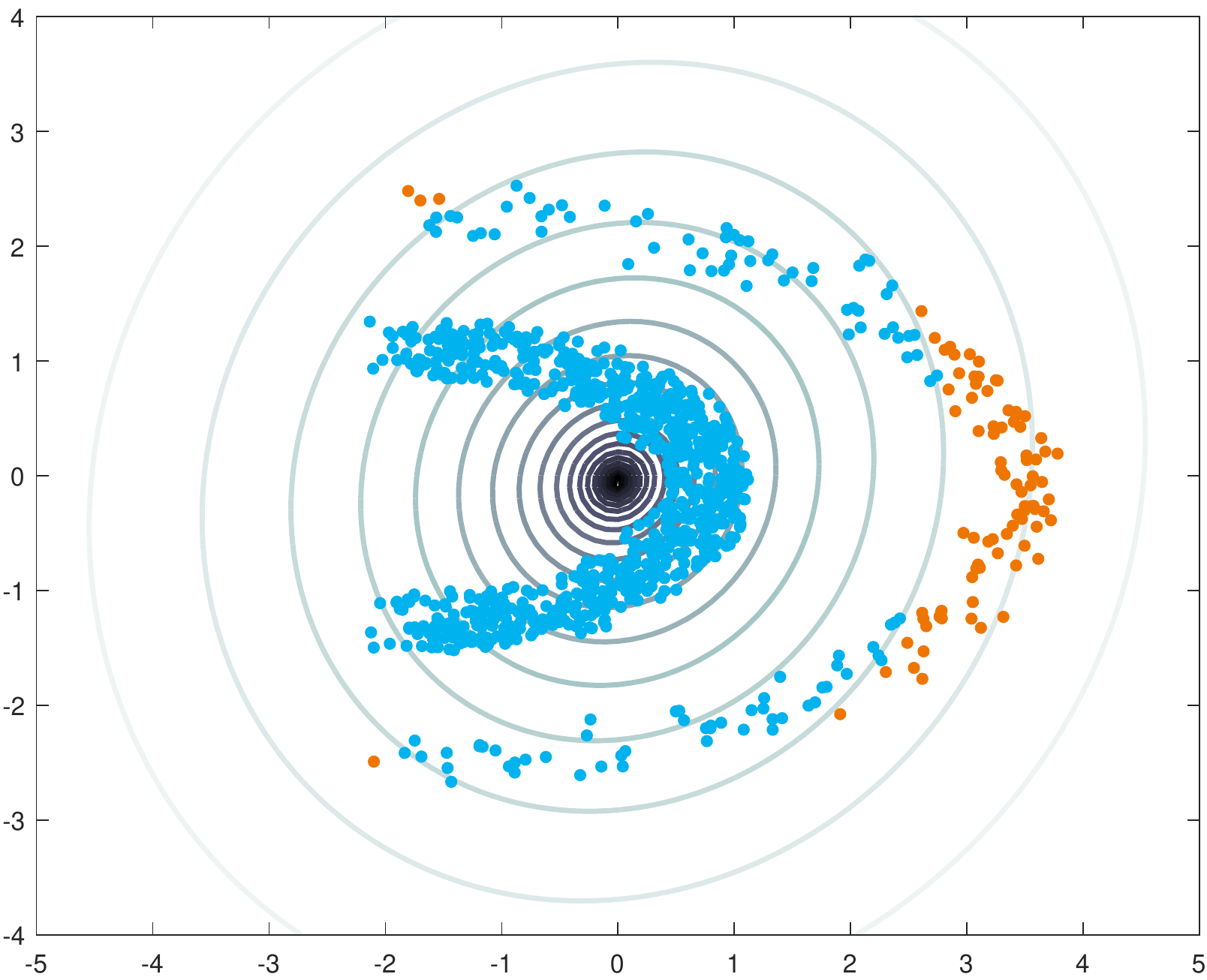}
\caption{Results of the non-kernel MCD 
method on the toy datasets of Figure 
\ref{fig:toy1}. The contour lines are 
level curves of the MCD-based 
Mahalanobis distance.}
\label{fig:toylinear}
\end{figure} 

We now apply the kernel MRCD method to
the same datasets.
For the elliptical dataset we use the
linear kernel, and for the non-elliptical
dataset we use the RBF kernel with tuning
constant $\sigma$ given by formula
\eqref{eq:heuristic}.
This yields Figure~\ref{fig:toyKMRCD}.
We first focus on the left hand column.
The figure shows three stages of the
KMRCD runs. At the top, in 
Figure~\ref{fig:toyKMRCD}(a), 
we see the 
result for the selected $h$-subset, 
after the C-steps have converged.
The members of that $h$-subset are the
green points, whereas the points 
generated as outliers are colored red.
Since $h$ is lower than the true number
of inlying points, some inliers (shown
in black) are not included in the 
$h$-subset.
In the next step, 
Figure~\ref{fig:toyKMRCD}(b) shows the
robust Mahalanobis distances, with
the horizontal line at the 
cutoff value given by formula
\eqref{eq:finalCutoff}.
The final output of KMRCD shown in
Figure~\ref{fig:toyKMRCD}(c) has the
flagged outliers in orange and the
points considered inliers in blue.
As expected, this result is similar 
to that of the non-kernel MCD in the
left panel of Figure 
\ref{fig:toylinear}.

The right hand column of 
Figure~\ref{fig:toyKMRCD} shows the
stages of the KMRCD run on the 
non-elliptical data. The results for
the selected $h$-subset in 
Figure~\ref{fig:toyKMRCD}(a) look much
better than in the right hand panel of
Figure \ref{fig:toylinear}, because 
the level curves of the robust 
distance now follow the shape of the 
data. In stage (b) we see that the
distances of the inliers and the 
outliers are fairly well separated by
the cutoff \eqref{eq:finalCutoff},
with a few borderline cases, and
stage (c) is the final result.
This illustrates that using a nonlinear
kernel allows us to fit non-elliptical
data.

\begin{figure}[ht]
\centering
\includegraphics[width=0.95\textwidth]
  {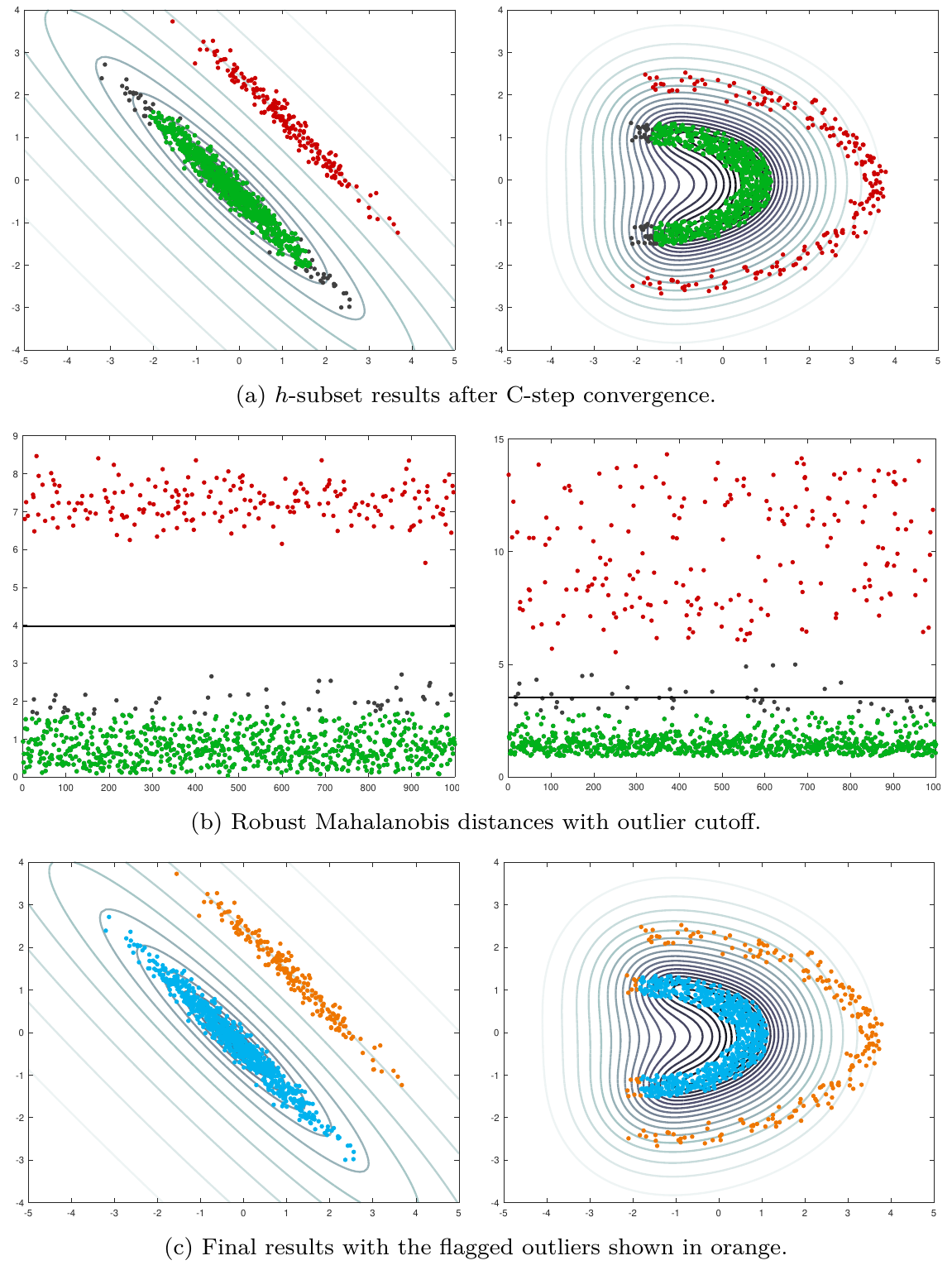}
\caption[]{Kernel MRCD results on the 
  toy datasets of Figure \ref{fig:toy1}. 
	In the left column the linear kernel 
	was used, and in the right column the 
	RBF kernel. The three stages (a), (b)
	and (c) are explained in the text.}
\label{fig:toyKMRCD}
\end{figure}
\FloatBarrier

\section{Simulation Study}
\label{sec:simulation}	

\subsection{Simulation study with 
  linear kernel}	
\label{subsec:Lin_simulation}	

In this section we compare the KMRCD 
method proposed in the current paper,
run with the linear kernel, to the 
MRCD estimator of Boudt et al. 
\cite{boudt2020minimum}.
\blue{Recall that using the linear 
kernel $k(x,y) = x^\top y$ means that 
the feature space can be taken identical 
to the coordinate space, so using the 
linear kernel is equivalent to not
using a kernel at all.}
Our purpose is twofold.
First, we want to verify whether KMRCD 
performs well in terms of robustness
and accuracy, and that its results are 
consistent with those of MRCD.
And secondly, we wish to measure the
computational speedup obtained by KMRCD 
in high dimensions.
In order to obtain a fair comparison
we run MRCD with the identity
matrix as target, which corresponds 
to the target of KMRCD.
All computations are done in 
MATLAB on a machine with Intel 
Core i7-8700K and $16$ GB of 
$3.70$GHz RAM. 
	
For the uncontaminated data, that is,
for contamination fraction 
$\varepsilon = 0$, we generate 
$n$ cases from a $p$-variate normal 
distribution with true covariance
matrix $\Sigma$. 
Since the methods under consideration 
are equivariant under translations and
rescaling variables, we can assume
without loss of generality that the 
center $\mu$ of the distribution is 0 
and that the diagonal elements of 
$\Sigma$ are all 1. We denote the
distribution of the clean data by
$\mathcal{N}(0,\Sigma)$. 
Since the methods are not equivariant 
to arbitrary nonsingular affine
transformations we cannot set $\Sigma$ 
equal to the identity matrix. 
Instead we consider $\Sigma$ of the 
ALYZ type, generated as in Section 4 
of \cite{agostinelli2015robust}, which 
yields a different $\Sigma$ in each 
replication, \blue{but always with
condition number $100$. The main 
steps of the construction of $\Sigma$ 
in \cite{agostinelli2015robust} are
the generation of a random orthogonal
matrix to provide eigenvectors, and
the generation of $p$ eigenvalues
such that the ratio between the largest
and the smallest is 100, followed by
iterations to turn the resulting
covariance matrix into a correlation
matrix while preserving the
condition number. (In section A.3 of
the supplementary material also
$\Sigma$ matrices with higher condition
numbers were generated, with similar
results.)}

For a contamination fraction
$\varepsilon > 0$ we replace a random
subset of $\floor{\varepsilon n}$ 
observations by outliers of different 
types.
{\it Shift contamination} is generated 
from $\mathcal{N}(\mu_C, \Sigma)$ where 
$\mu_C$ lies in the direction where the 
outliers are hardest to detect, which is
that of the last eigenvector $v$ of the 
true covariance matrix $\Sigma$. 
We rescale $v$ by making 
$v^T \Sigma^{-1} v = E[Y^2] = p$ where 
$Y^2 \sim \chi^2_p$\;.
The center is taken as $\mu_C = k v$ 
where we set $k=200$. 
Next, {\it cluster contamination} stems 
from $\mathcal{N}(\mu_C, 0.05^2\,I_p)$ 
where $I_p$ is the identity matrix. 
Finally, {\it point contamination} 
places all outliers in the point $\mu_C$ 
so they behave like a tight cluster. 
These settings make the simulation 
consistent with those in 
\cite{boudt2020minimum,hubert2012deterministic} 
and \cite{DeKetelaere:RT-DetMCD}.
The deviation of an estimated scatter
matrix $\hSigma$ relative to the true 
covariance matrix $\Sigma$ is measured 
by the Kullback–Leibler (KL) divergence 
$\mbox{KL}(\hSigma,\Sigma) = \trace(
 \hSigma \Sigma^{-1}) - \log(\det(
 \hSigma \Sigma^{-1})) - p$.
The speedup factor is measured as
$\mbox{speedup}=\mbox{time}(\mbox{MRCD})/
 \mbox{time}(\mbox{KMRCD})$.
Different combinations of $n$ and $p$ are 
generated, ranging from $p = n/2$ to
$p = 2n$.

Table \ref{tbl:KL} presents the 
Kullback–Leibler deviation results. 
The top panel is for $\eps=0$, the 
middle panel for $\eps=0.1$ and the 
bottom panel for $\eps=0.3$\,.
All table entries are averages over 50 
replications.  
First look at the results without
contamination. 
By comparing the three choices for 
$h$, namely $\floor{0.5n}$, 
$\floor{0.75n}$ and $\floor{0.9n}$,
we see that lowering $h$ in this
setting leads to increasingly
inaccurate estimates $\hSigma$.
This is the price we pay for being
more robust to outliers, since
$n-h$ is an upper bound on the number
of outliers the methods can handle.
When we look at the panels for
higher $\varepsilon$ we see a similar
pattern. When $\varepsilon = 0.1$
the choice $\floor{0.9n}$ is 
sufficiently robust, and the lower
choices of $h$ have higher KL
deviation. But when 
$\varepsilon = 0.3$ only the choice 
$h = \floor{0.5n}$ can detect the 
outliers, the other choices cause
the estimates to break down.
These patterns are confirmed by the 
averaged 
$\mbox{MSE}=\sum_{i=1}^{p}\sum_{j=1}^{p} 
 (\hSigma - \Sigma)^2_{ij}/p^2$
shown in Table 7 in 
the Supplementary Material. 

From these results we conclude that
it is important that $h$ be chosen
lower than $n$ minus the number of
outliers, but not much lower since
that would make the estimates less
accurate. A good strategy is to 
first run with a low $h$, which 
reveals the number of outliers, and
then to choose a higher $h$ that can 
still handle the outliers and yields
more accurate results as well.
 
As expected the KMRCD results are 
similar to those of MRCD, but
not identical because there are
differences in the selection of 
initial estimators, also leading
to differences in the resulting
regularization parameter $\rho$
shown in Table 8 
in the Supplementary Material.

\begin{table*} 
\caption{Kullback-Leibler deviations 
         of $\hSigma$ from $\Sigma$. } 
\label{tbl:KL}
\centering               
\begin{adjustbox}{width=0.85\textwidth}                        
\centering
\begin{tabular}{cccccccccc}
\toprule                 
	& \multicolumn{3}{c}{Point contamination} & \multicolumn{3}{c}{Shift contamination}  & \multicolumn{3}{c}{Cluster contamination}\\ 
	& \multicolumn{3}{c}{Value of $h/n$} & \multicolumn{3}{c}{Value of $h/n$}  & \multicolumn{3}{c}{Value of $h/n$}\\ 	
				\cmidrule(lr){2-4} \cmidrule(lr){5-7} \cmidrule(lr){8-10}
 & 0.50 & 0.75 & 0.90 & 0.50 & 0.75 & 0.90 & 0.50 & 0.75 & 0.90\\
\textbf{$\varepsilon=0$\,:} \\
\midrule
\textbf{KMRCD} \\
	400$\times$200	&126.72		&80.03		&64.65		&127.66		&78.99	&64.54		&129.37		&79.42		&64.45			\\
	300$\times$200	&174.37		&110.03		&88.43		&176.46		&108.45		&87.94		&174.52		&109.68		&87.50		\\
	200$\times$200	&262.41		&171.74		&140.18		&263.03		&172.23		&140.66		&260.21		&169.02		&140.45			\\
	200$\times$300	&492.70		&381.13		&319.42		&491.65		&379.07		&317.45		&491.64		&373.38		&319.44			\\
	200$\times$400	&724.41		&602.78		&535.59		&715.59		&602.44		&532.55		&731.76		&607.21		&537.27			\\ 
\; \\
\textbf{MRCD} \\
	400$\times$200	& 126.58 & 80.82	&65.65	&127.00	&79.99 &65.63 & 128.92	& 80.63	& 65.35	\\
	300$\times$200	&175.88	&110.57	&89.49	&176.21	&109.22	&88.79	&174.36		&110.28		&88.85			\\
	200$\times$200	&265.04		&174.13	&141.23	&264.57	&173.56	&141.10	&261.93		&172.00	&141.05			\\
	200$\times$300	&499.11		&384.91		&323.48	&500.80			&383.02		&322.14		&499.30	&378.74		&324.27	\\
	200$\times$400	&734.47		&608.04		&539.54		&729.52		&610.79		&539.24		&738.21		&611.83		&543.84	\\
\; \\
\textbf{$\varepsilon=0.1$\,:} \\
\midrule
\textbf{KMRCD} \\
	400$\times$200	&128.14	&78.48 &63.16	&127.28	&78.28	&62.28	&128.91	&79.75	&62.71		\\
	300$\times$200	&176.76	&107.10	&86.94	&174.32	&109.86	&87.37	&176.13	&108.54	&87.86		\\
	200$\times$200	&263.76	&172.21	&137.31	&260.06	&171.46	&137.72	&260.42	&171.76	&136.27		\\
	200$\times$300	&493.36	&368.80	&311.48	&488.10		&377.46	&311.04	&491.24	&378.81	&319.45		\\
	200$\times$400	&728.07	&600.12	&558.79	&723.40		&596.87	&535.08	&720.44	&604.06	&534.81		\\
\; \\
\textbf{MRCD} \\
	400$\times$200	&128.22	&79.59	&64.19	&127.92	&79.76	&63.48	&129.09	&81.11	&63.91		\\
	300$\times$200	&174.17	&107.67	&88.05	&172.96		&111.15	&88.00	&173.46	&109.50	&89.12		\\
	200$\times$200	&262.71	&171.55	&137.51	&259.52	&170.79	&138.03	&261.48	&170.95	&136.31		\\
	200$\times$300	&493.66	&368.88	&309.88	&494.44	&382.69	&312.85	&499.27	&382.10	&320.48		\\
	200$\times$400	&723.42	&599.88	&525.99	&736.52	&601.15	&536.73	&733.36	&611.43	&537.56		\\
\; \\
\textbf{$\varepsilon=0.3$\,:} \\
\midrule
\textbf{KMRCD} \\
	400$\times$200	&127.46	&4914.7	&2073.9	&126.26	&1142.7	&1613.4	&124.73	&1124.8	&1600.6 \\
	300$\times$200	&176.51	&5104.1	&2046.1	&176.82	&1125.9	&1597.6	&173.30	&1117.2	&1555.3 \\
	200$\times$200	&257.91	&5180.6	&2038.3	&255.55	&1168.8	&1559.1	&257.90	&1163.6	&1535.9 \\
	200$\times$300	&485.71	&5494.5	&2230.0	&488.19	&1310.0	&1626.8	&490.05	&1311.1	&1616.9 \\
	200$\times$400	&714.57	&5779.1	&2316.1	&721.41	&1448.7	&1736.7	&718.17	&1423.4	&1721.1 \\
\; \\
\textbf{MRCD} \\
	400$\times$200	&124.33	&6771.6	&3082.7	&125.15	&1395.6	&2068.9	&124.68	&1371.1	&2078.5 \\
	300$\times$200	&164.89	&7118.6	&3049.4	&172.11	&1415.2	&2076.5	&168.51	&1393.0	&2011.1 \\
	200$\times$200	&237.08	&7519.3	&3075.1	&241.27	&1481.1	&2040.1	&242.59	&1485.8	&2014.7 \\
	200$\times$300	&450.12	&8233.1	&3413.4	&483.07	&1653.4	&2122.5	&483.67	&1659.9	&2102.4 \\
	200$\times$400	&663.35	&8585.0	&3507.7	&717.65	&1812.8	&2212.9	&719.74	&1790.5	&2201.6 \\
\bottomrule
\end{tabular}
\end{adjustbox}
\end{table*}
	
\begin{table*} 
\caption{Speedup factors of KMRCD relative to MRCD.} 
\label{tbl:speedups}
\centering               
\begin{adjustbox}{width=0.9\textwidth}                        
\centering
\begin{tabular}{cccccccccc}
\toprule                 
  & \multicolumn{3}{c}{Point contamination} & \multicolumn{3}{c}{Shift contamination}  & \multicolumn{3}{c}{Cluster contamination}\\ 
	& \multicolumn{3}{c}{Value of $h/n$} & \multicolumn{3}{c}{Value of $h/n$}  & \multicolumn{3}{c}{Value of $h/n$}\\ 	
				\cmidrule(lr){2-4} \cmidrule(lr){5-7} \cmidrule(lr){8-10}
 & 0.50 & 0.75 & 0.90 & 0.50 & 0.75 & 0.90 & 0.50 & 0.75 & 0.90\\
\textbf{$\varepsilon=0$\,:} \\
\midrule
 400$\times$200	&97	&95	&89	&99	&96	&91	&100&98	&92 \\
 300$\times$200	&281	&242	&229	&281	&242	&229	&284	&242	&228\\
 200$\times$200	&301	&249	&227	&299	&253	&226	&301	&253	&228\\
 200$\times$300	&661	&558	&516	&623	&562	&519	&657	&563	&520\\
 200$\times$400	&1144	&979	&897	&1157	&982	&892	&1159	&978	&899\\
\; \\
\textbf{$\varepsilon=0.1$\,:} \\
\midrule
 400$\times$200	&98	&90	&87	&96	&92	&89	&96	&92	&90	\\	
 300$\times$200	&263	&227	&211	&278	&240	&225	&282	&239	&224	\\
 200$\times$200	&292	&243	&216	&302	&252	&225	&302	&249	&228	\\
 200$\times$300	&631	&534	&504	&652	&564	&516	&664	&567	&512 \\
 200$\times$400	&1113	&951	&870	&1157	&981	&902	&1110	&980	&903 \\
\; \\
\textbf{$\varepsilon=0.3$\,:} \\
\midrule
 400$\times$200	&77	&79	&72	&100	&96	&93	&99	&95	&93 \\
 300$\times$200	&211	&193	&185	&281		&251	&233	&288	&251	&234 \\
 200$\times$200	&234	&206	&202	&301	&262	&238	&299	&257	&238 \\
 200$\times$300	&543	&472	&432	&653	&564	&522	&654	&566&520 \\
 200$\times$400	&1000	&791	&749	&1161	&976	&911	&1150	&977&900 \\
\bottomrule
\end{tabular}
\end{adjustbox}
\end{table*}

We now turn our attention to the 
computational speedup factors in 
Table \ref{tbl:speedups}, that were
derived from the same simulation
runs as Table \ref{tbl:KL}. Overall 
KMRCD ran substantially faster than
MRCD, with the factor becoming
larger when $n$ decreases and/or
the dimension $p$ increases.
There are two reasons for the
speedup. First of all, the MRCD 
algorithm computes six initial 
scatter estimates, of which the last 
one is the most computationally
demanding since it computes a robust 
bivariate correlation of every pair
of variables, requiring $p(p-1)/2$
computations whose total time 
increases fast with $p$. 
Part of the speedup stems
from the fact that KMRCD does not
use this initial estimator, whereas
its own four kernelized initial 
estimates gave equally robust results.
This explains most of the speedup
in Table \ref{tbl:speedups}.

For $p>n$ there is a second reason
for the speedup, the use of the kernel 
trick. 
\blue{In particular, each C-step 
requires the computation of the 
Mahalanobis distances of all cases.
MRCD does this by inverting the
$p \times p$ covariance matrix 
$\hSigma^H_\mathrm{reg}$\,, whereas
KMRCD uses 
equation~\eqref{eq:regMahal} which
implies that it suffices to invert 
the $n \times n$ kernel matrix
$\tilde{K}_{\mathrm{reg}}^H$\,, 
which takes time complexity 
$O(n^3)$ instead of $O(p^3)$.}

\subsection{\blue{Simulation with nonlinear 
                  kernel}}
\label{subsec:NonLin_simulation}	

\blue{In this section we compare the proposed 
KMRCD estimator to the MRCD estimator of Boudt 
et al.~\cite{boudt2020minimum} on two types
of non-elliptical datasets.
The first type is generated by a copula.
We start by considering the $t$
copula~\cite{nelsen2007introduction} with
Pearson correlation 0.1 
and $\nu = 1$ degrees of freedom.
The black points in the left panel of
Figure~\ref{fig:nonLinSimulation_T} were
generated from this copula.
We then added contamination in the form 
of uniformly distributed random noise
on the unit square, where points lying 
close to the regular distribution were 
removed. 
The red points in the left panel of
Figure~\ref{fig:nonLinSimulation_T} are
the outliers.
Apart from the $t$ copula we also consider 
the Frank, Clayton, and Gumbel copulas with
Kendall rank correlation 
$\tau = 0.6$\,. 
They are visualized in Figure~9
in section A.6 of the Supplementary 
Material.}
 
\begin{figure}[ht]
\centering
\includegraphics[width=1.0\textwidth]
	{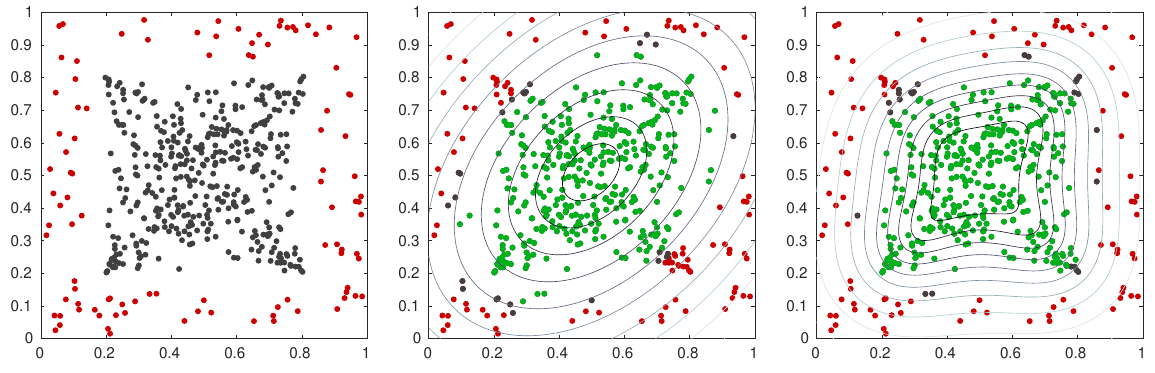}	
\caption{\blue{Illustration of the 
non-elliptical 
simulation setting with data generated from 
the $t$ copula, plus $20\%$ of outlying 
observations. In the left panel, the regular 
observations are shown in black and the outliers 
in red. The results of the MRCD estimator are in 
the middle panel, and those of the KMRCD 
estimator in the rightmost panel, each for
$h = 0.75n$. In those panels the points in the 
$h$-subset are shown in green, and the other
points with the $n(1-\varepsilon)$ lowest 
(kernel) Mahalanobis distance are depicted in 
grey. The remaining points are shown in red. 
The curves are contours of the 
robust (kernel) Mahalanobis distance.}}
\label{fig:nonLinSimulation_T}
\end{figure}

\blue{The proposed estimator is also 
benchmarked in a second type of setting
where the regular observations are 
uniformly distributed on the unit circle, 
and inside the circle are outliers generated 
from the Gaussian distribution with 
center $0$ and covariance matrix equal
to $0.04$ times the identity matrix.
This setting is illustrated in 
Figure~\ref{fig:nonLinSimulation_Circle}.
This is a simple example where the clean
data lie near a manifold.} 

\begin{figure}[ht]
\centering
\includegraphics[width=1.0\textwidth]
	{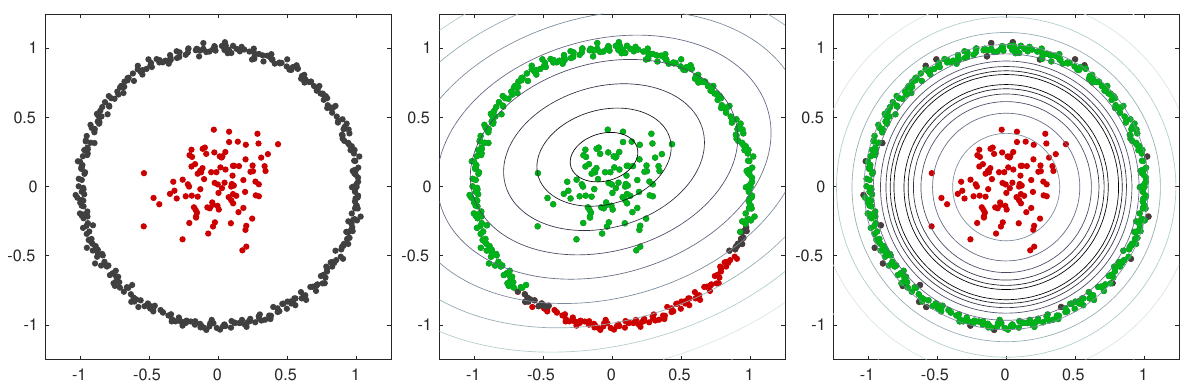}	
\caption{\blue{Illustration of the 
non-elliptical 
simulation setting with data generated from 
the circle manifold, plus $20\%$ of outlying 
observations. The remainder of the description
is as in Figure~\ref{fig:nonLinSimulation_T}.}}
\label{fig:nonLinSimulation_Circle}
\end{figure}

\blue{In the simulation we generated 100
datasets of each type, with $n=500$ and
the outlier fraction $\varepsilon$ equal
to 0.1 or 0.2, so the number of regular
observations is $n(1-\varepsilon)$.  
With all four copulas the KMRCD estimator 
used the radial basis function with
bandwidth~\eqref{eq:heuristic}.
For the circle-based data the polynomial
kernel $k(x,y) = (x^\top y + 1)^2$ of
degree 2 was used.}

\blue{We measure the performance by counting 
the number of outliers in the $h$-subset, and 
among the $n(1-\varepsilon)$ points with the 
lowest (kernel) Mahalanobis distance. 
The averaged counts over the 100 replications
are shown in Table~\ref{tbl:nonLin}.
By comparing the rows of KMRCD and MRCD with
the same $\varepsilon$, we see that MRCD has 
more true outliers in its $h$-subset and its
$n(1-\varepsilon)$ set.
In the table, KMRCD outperforms MRCD for both 
choices of $\varepsilon$ and for all three 
choices of $h$.
The good performance of KMRCD is also seen 
in the right panel of 
Figure~\ref{fig:nonLinSimulation_T}, where 
the contours of the kernel Mahalanobis 
distance nicely follow the distribution. 
The difference between MRCD and KMRCD is 
most apparent on the circle-based data: in 
Figure~\ref{fig:nonLinSimulation_Circle} the
KMRCD fits the regular data on the 
circle, whereas the original MRCD method, 
by its nature, considers the outliers in the
center as regular data.}

\begin{table*} 
\caption{\blue{Average number of outliers in 
the $h$-subset $H$, and among the 
$n(1-\varepsilon)$ points with lowest 
(kernel) Mahalanobis distance.} } 
\label{tbl:nonLin}
\centering               
\begin{adjustbox}{width=1\textwidth}                        
\centering
\begin{tabular}{cccccccccccccccc}
\toprule                 
	& \multicolumn{3}{c}{$t$ copula} 
	& \multicolumn{3}{c}{Frank copula} 
	& \multicolumn{3}{c}{Clayton copula}
	& \multicolumn{3}{c}{Gumbel copula}
	& \multicolumn{3}{c}{Circle}\\ 
\cmidrule(lr){2-4} \cmidrule(lr){5-7} \cmidrule(lr){8-10} \cmidrule(lr){11-13} \cmidrule(lr){14-16}
	& 0.75 & 0.8 & 0.9 & 0.75 & 0.8 & 0.9 
	& 0.75 & 0.8 & 0.9 & 0.75 & 0.8 & 0.9 
	& 0.75 & 0.8 & 0.9 \\
\textbf{$\varepsilon=0.1$\,:} \\
\midrule
\textbf{KMRCD} \\
	$H$	& 0 & 0 & 0.5 & 0 & 0 & 2.9 & 0 & 0 & 2.6 & 0 & 0 & 2.8 & 0 & 0 & 0	\\
	$n(1-\varepsilon)$ & 10 & 8.2 & 0.5 & 3.9 & 4.2 & 2.9 & 4.3 & 4.4 & 2.6 & 4.4 & 4.7 & 2.8 & 0 & 0 & 0	\\
 \; \\
\textbf{MRCD} \\
 $H$ & 1.2 & 2.9 & 13.3 & 1.3 & 2.1 & 11.2 & 2.4 & 3.7 & 11.6 & 2.0 & 2.9 & 11.2 & 50 & 50 & 50	\\
	$n(1-\varepsilon)$ & 22.5 & 20.3 & 13.3 & 12.9 & 12.2 & 11.2 & 14.4 & 14.0 & 11.6 & 13.7 & 13.3 & 11.2 & 50 & 50 & 50	\\
 \; \\
\textbf{$\varepsilon=0.2$\,:} \\
\midrule
\textbf{KMRCD} \\
 $H$ & 0.1 & 2.1 & / & 0 & 3.5 & / &  0 & 3.3 & / & 0 & 3.8 & / & 0 & 0 & /	\\
 $n(1-\varepsilon)$	& 9.0 & 2.1 & / & 4.8 & 3.5 & / & 4.4 & 3.3 & / & 5.2 & 3.8 & / & 0 & 0 & /	\\
 \; \\
\textbf{MRCD} \\
 $H$ & 12.9 & 21.7 & / & 8.4 & 16.3 & / & 10.2 & 18.5 & / & 10 & 18.7 & / & 100 & 100 & /	\\
 $n(1-\varepsilon)$	& 27.5 & 21.7 & / & 18.9 & 16.3 & / & 19.7 & 18.5 & / & 20.1 & 18.7 & / & 100 & 100 & /	\\
\bottomrule
\end{tabular}
\end{adjustbox}
\end{table*}

\blue{We conclude that in this nonlinear 
setting, KMRCD has successfully extended 
the MRCD to non-elliptical distributions. 
We want to add two remarks about this.
First, as in all kernel-based methods the 
choice of the kernel is important, and 
choosing a different kernel can lead to
worse results. And second, just as in the
linear setting $h$ should be lower than
$n$ minus the number of outliers, so in
practice it is recommended to first run 
with a low $h$, look at the results in order
to find out how many outliers there are, and
possibly run again with a higher $h$.}

\blue{Section A.4 
of the supplementary material contains 
additional simulation results about
the computation time of the four initial
estimators in KMRCD and their subsequent 
C-steps, in different settings with
linear and nonlinear kernels.}

\section{Experiments} 
\label{sec:Rapplication}	
	
\subsection{Food industry example}
\label{sec:raisins}

We now turn our attention to a real dataset
from the food industry.
In that setting datasets frequently contain 
outliers, because samples originate from 
natural products which are often 
contaminated by insect damage, local 
discolorations and foreign material. 
It also happens that the image acquisition 
signals yield non-elliptical data, and in 
that case a kernel transform can help.

\begin{figure}[ht]
\centering
\includegraphics[width=1.0\textwidth]
  {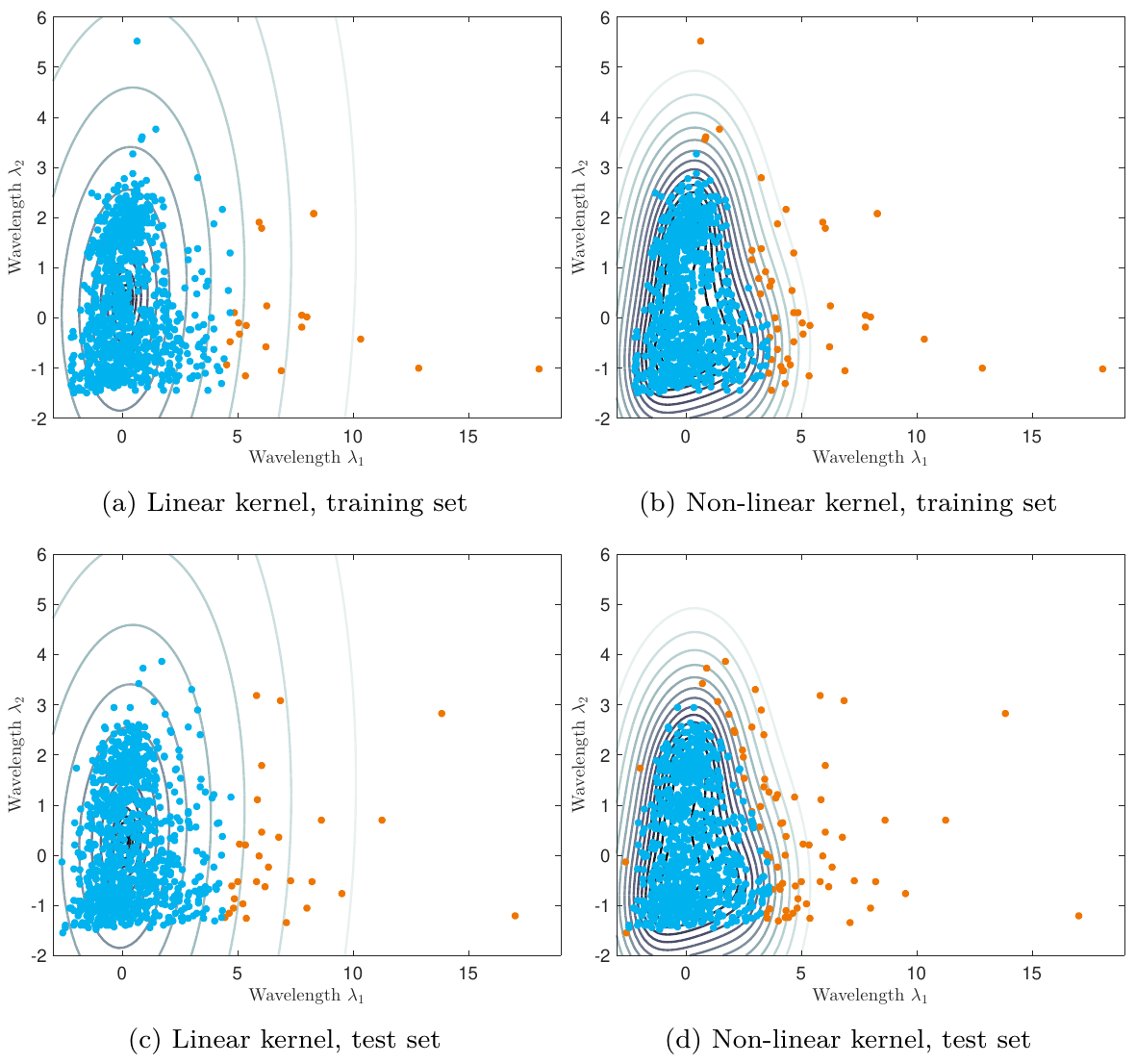}
\caption{Food industry example: KMRCD 
results with the linear kernel (left 
column) and the RBF kernel (right column). 
The top row contains the training data, 
and the resulting fits were applied to 
the test data in the bottom row.}
\label{fig:raisin}
\end{figure}

The dataset is bivariate and contains two
color signals measured on organic sultana 
raisin samples. 
The goal is to classify these into inliers
and outliers, so that during production 
outliers can be physically removed 
from the product in real time.
There are training data and test data,
but the class label `outlier' is not 
known beforehand.
The scatter plot of the training data in
Figure \ref{fig:raisin}~(a) reveals the 
non-elliptical (and to some extent 
triangular) structure of the inliers.
Three types of outliers are visible.
Those with high values of $\lambda_1$ 
and low $\lambda_2$ at the bottom right 
correspond to foreign, cap-stem related 
material like wood, whereas points 
with high values of $\lambda_2$ represent 
discolorations. 
There are also a few points with high
values of both $\lambda_1$ and 
$\lambda_2$ which correspond to either 
discolored raisins or objects with clear 
attachment points of cap-stems.
Outliers of any of these three types need
to be flagged and removed from the 
product. 
From manually analyzing data of this 
product it is known beforehand that the 
fraction of outliers is rather low, at 
most around $2\%$.

We first run KMRCD on the training data.
In its preprocessing step it standardizes
both variables.
For comparison purposes we use two 
kernels. In the left hand column of
Figure \ref{fig:raisin} we apply the
linear kernel, and in the right hand
column we use the RBF kernel with tuning 
constant $\sigma$ given by
\eqref{eq:heuristic}.
Since we know the fraction of outliers 
is low we can put $h = \floor{0.95n}$.
Each figure shows the flagged points in
orange and the remaining points in blue,
and the contour lines are level curves 
of the robust distance.

The fit with linear kernel in 
Figure \ref{fig:raisin}~(a) has contour
lines that do not follow the shape of 
the data very well, and as a consequence
it fails to flag some of the outliers, 
such as those with high $\lambda_2$ and 
some with relatively high values of both 
$\lambda_1$ and $\lambda_2$. 
The KMRCD fit with nonlinear kernel 
in Figure \ref{fig:raisin}~(b) has
contour lines that model the data more 
realistically. 
This fit does flag all three types of
outliers correctly.
Both trained models were then used to 
classify the previously unseen test set. 
The results are similar to those on the
training data.
The anomaly detection with linear kernel
in Figure \ref{fig:raisin}~(c) again misses 
the raisin discolorations, which would 
keep these impurities in the final 
consumer product.
Fortunately, the method with the
nonlinear kernel in panel (d) does
flag them.

\subsection{MNIST digits data}

Our last example is high dimensional.
The MNIST dataset contains images of 
handwritten digits from 0 to 9, at
the resolution of $28 \times 28$
grayscale pixels (so there are 784 
dimensions), and was downloaded from 
\url{http://yann.lecun.com/exdb/mnist}.
There is a training set and a test set.
Both were subsampled to 1000 images.
To the training data we added noise
distributed as $\mathcal{N}(0,(0.5)^2)$
to $20\%$ of the images, and in the
test set we added noise with the same
distribution to all images.
We then applied KMRCD with RBF kernel
with tuning constant $\sigma$ given by
\eqref{eq:heuristic} and 
$h=\floor{0.75n}$ to the 1000 training 
images.
Next, we computed the eigenvectors of 
the robustly estimated covariance 
matrix.

Our goal is to denoise the images in
the test set by projecting them onto
the main eigenvectors found in the
training data. 
As we are interested in a reconstruction 
of the data in the original space rather 
than in the feature space, we transform 
the scores back to the original input 
space by the iterative optimization 
method of \cite{mika1999kernel}.

\begin{figure}[ht]
\centering					
\includegraphics[width=\linewidth]
  {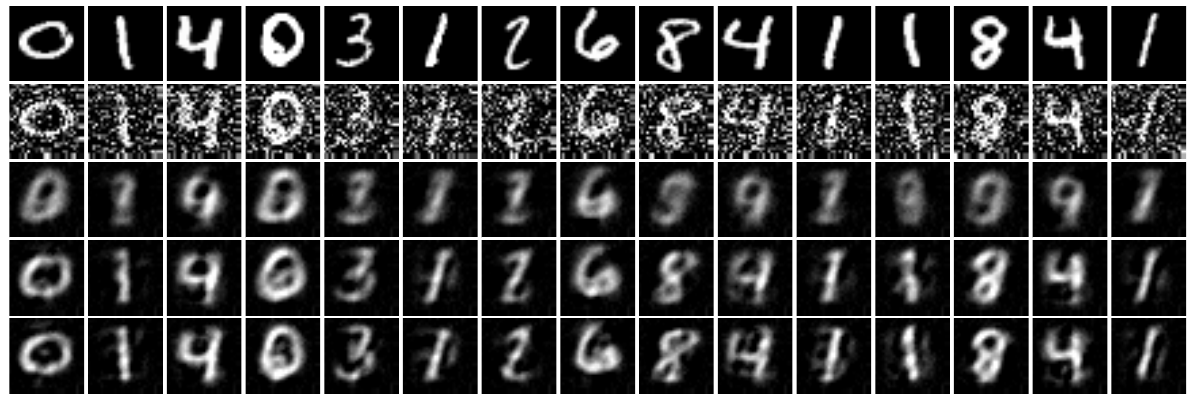}\\		
\vspace{0.4cm}
\includegraphics[width=\linewidth]
  {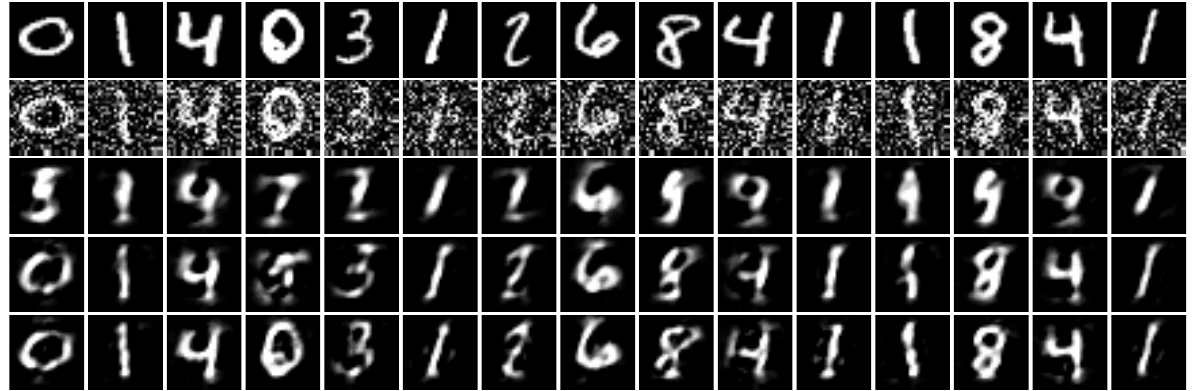} 			
\caption{MNIST denoising results based
on the classical covariance matrix
(top panel) and on the KMRCD estimate
(bottom panel). The first and second 
rows of each panel show the same 
original and noise-added test set 
images. The remaining rows contain the 
results of projecting on the first 5, 
15, and 30 eigenvectors.}
\label{fig:MNIST}
\end{figure}

The top panel of Figure \ref{fig:MNIST} 
illustrates what happens when applying 
this computation to the classical 
covariance matrix in feature space, which 
corresponds to classical kernel 
PCA \cite{scholkopf1998nonlinear}.
The bottom panel is based on KMRCD.
The first row of each panel displays 
original test set images, and the second 
row shows the test images after the noise 
was added.
The first and second rows do not depend on
the estimation method, but the remaining
rows do. There we see the results of 
projecting on the first 5, 15, and 30 
eigenvectors of each method.
In the top panel those images are rather 
diffuse, which indicates that the 
classical approach was affected by the 
training images with added noise and
considers the added noise as part of
its model. This implies that increasing
the number of eigenvectors used will not 
improve the overall image quality much. 
The lower panel contains sharper images, 
because the robust fit of the training 
data was less affected by the images with
added noise that acted as outliers.

We can also compute the mean absolute 
error
$\sum_{i=1}^n\sum_{j=1}^{p} |x_{i,p} -
 \hat{x}_{i,p}|/(np)$\linebreak
between the original test images (with
$p=784$ dimensions) and the projected
versions of the test images with added
noise.
Figure \ref{fig:mnistNUM} shows this 
deviation as a function of the number
of eigenvectors used in the projection.
The deviations of the robust method
are systematically lower than those of
the classical method, confirming the
visual impression from Figure
\ref{fig:MNIST}.

\begin{figure}[ht]
\centering
\includegraphics[width=0.5\textwidth]
  {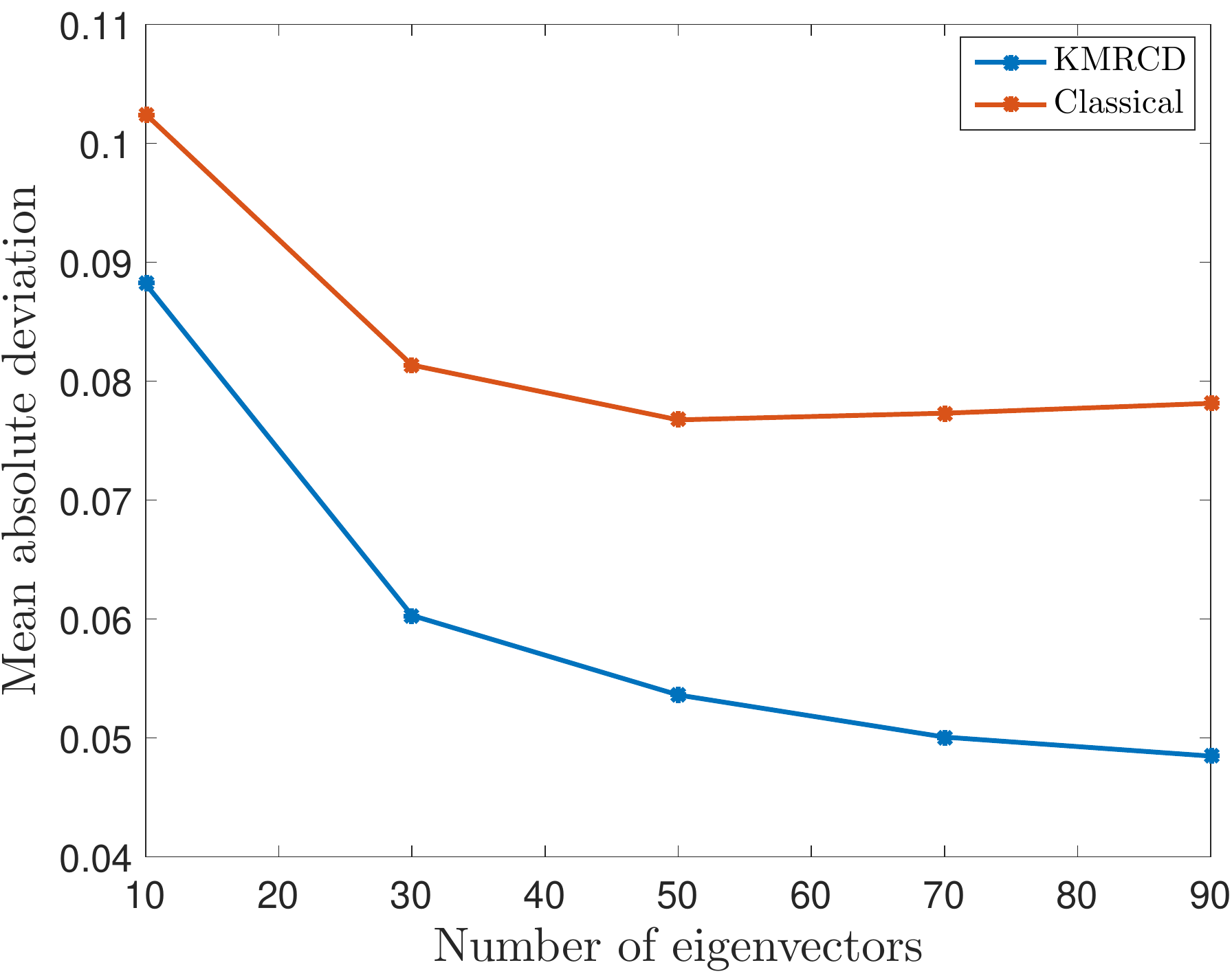}
\caption{The mean absolute error of the
denoised images to the original test images
in function of the number of eigenvectors
used in the projection. The top curve is 
for the classical covariance matrix in 
feature space, the lower curve for KMRCD.}
\label{fig:mnistNUM}
\end{figure}

\section{Conclusions} 
\label{sec:conclusion}	
	
The kernel MRCD method introduced in this
paper is a robust method that allows to
analyze non-elliptical data when used with
a nonlinear kernel. Another advantage
is that even when using the linear kernel
the computation becomes much faster when
there are more dimensions than cases, a
situation that is quite common nowadays.
Due to the built-in regularization the
result is always well-conditioned.

The algorithm starts from four kernelized
initial estimators, and to each it 
applies a new kernelized refinement step.
The remainder of the algorithm is based 
on a theorem showing that C-steps in
feature space are equivalent to a new
type of C-steps on centered kernel 
matrices, so the latter reduce the 
objective function.
The performance of KMRCD in terms of
robustness, accuracy and speed is 
studied in a simulation, and the method
is applied to several examples.
Potential future applications of the KMRCD 
method are as a building block for other 
multivariate techniques such as robust 
classification.

Research-level MATLAB code and an example 
script are freely available from the webpage 
\url{http://wis.kuleuven.be/statdatascience/robust/software}.\\

\noindent{\bf Acknowledgements.} We
thank Johan Speybrouck for providing 
the industrial dataset and Tim Wynants 
and Doug Reid for their support throughout 
the project. 
The research leading to these results has 
received funding from the European Research 
Council under the European Union's Horizon
2020 research and innovation program / ERC 
Advanced Grant E-DUALITY (787960). This 
paper reflects only the authors' views and 
the Union is not liable for any use that may 
be made of the contained information.
There was also support from the Research 
Council of KU Leuven (projects C14/18/068
and C16/15/068), the Flemish Government 
(VLAIO grant HBC.2016.0208 and FWO project 
GOA4917N on Deep Restricted Kernel Machines), 
and a PhD/Postdoc grant of the
Ford-KU Leuven Research Alliance Project 
KUL0076 (Stability analysis and performance 
improvement of deep reinforcement learning 
algorithms).




\clearpage
\pagenumbering{roman} 
%
\appendix
\numberwithin{equation}{section} 
\section{Supplementary Material} \label{sec:A}
\renewcommand{\theequation}
   {\thesection.\arabic{equation}}
	
\subsection{Algorithms} 
\label{sec:AppAlgo}

Algorithms 2 to 5 below
describe the initial estimators used.
Algorithm 6 
carries out the subsequent refinement 
procedure.

\begin{algorithm}[H]
\caption{Spatial median initial estimator}
\begin{enumerate}
\item \textbf{Input}: kernel matrix $K$ and
      the number $h$.
\item Initialize the vector 
      $\gamma = (1/n, 1/n, \dots, 1/n)^\top$
			with length $n$.
\item For $\mathrm{iteration} = 1$ to $10$ do
	\begin{enumerate}
	\item Update the coefficients $\gamma_i$ by
		\begin{equation*}
			 \gamma_{\mathrm{new},i} := ({K_{ii} - 
			2 \gamma_{\mathrm{old}}^T K_{.i} + 
			\gamma_{\mathrm{old}}^T 
			K \gamma_{\mathrm{old}}})^{-\frac{1}{2}}
		\end{equation*}
	      where $K_{ii}$ is the $i$-th diagonal 
				element of $K$ and $K_{.i}$ is the 
				$i$-th column of $K$. 
	\item Normalize $\gamma_{\mathrm{new}}
	      \leftarrow \gamma_{\mathrm{new}} / 
				(\sum_{i=1}^{n} 
				\gamma_{\mathrm{new},i})$\;.
	\end{enumerate}
\item Compute the distance of each observation 
      to the spatial median by 
			\eqref{eq:edistance}.
\item Construct $H$ as the set of the $h$ 
      observations with the lowest distances.	
\item Determine the weights $w_i$ using 
      \eqref{eq:hardReject}.
\item \textbf{Return:} the location weights 
      $w_i$ and the covariance weights 
			$u_i := w_i$\;. 
\end{enumerate}
\end{algorithm}

\begin{algorithm}[H]
\caption{Stahel-Donoho initial estimator}
\begin{enumerate}
\item \textbf{Input}: kernel matrix $K$ and
      the number $h$.
\item Initialize the vector of outlyingness
      values of the $n$ observations as 
			$\eta = (0, \dots, 0)^\top$.
\item For $q=1:500$ (number of directions) do
	\begin{enumerate}
	\item Select indices $i \neq j$ at random 
	      from $\{1,\ldots,n\}$
	\item Initialize the vector 
	      $\lambda = (0, \dots, 0)$ of length
				$n$ and set $\lambda(i) = +1$ and
				$\lambda(j) = -1$. 
				This vector represents the direction
				in $\mathcal{F}$ formed by two 
				observations. 
	\item Compute the projections of all $n$
	      points on this direction as 
	      $a=K\lambda/
				\sqrt{\lambda^\top K\lambda}$\;.
	\item Compute the outlyingness of all
	      $n$ projected points as 
				$r^{sd}=|a-\med(a)|/\mad(a)$, 
				where $\mad(.)$ is the median 
				absolution deviation.
	\item Update the maximum outlyingness of 
	      each observation: 
				$\eta_i^\mathrm{new} = 
				\max(\eta_i^\mathrm{old}, r^{d}_i)$.
	\end{enumerate}
\item Define $H$ as the set of $h$ observations 
      with the lowest $\eta_i$\;.	
\item Determine the weights $w_i$ using 
      \eqref{eq:hardReject}.
\item \textbf{Return:} the location weights 
      $w_i$ and the covariance weights 
			$u_i := w_i$\;.
\end{enumerate}
\end{algorithm}

\begin{algorithm}[H]
\caption{Spatial rank initial estimator}
\begin{enumerate}
\item \textbf{Input}: kernel matrix $K$ and
	      the number $h$.
\item Initialize the vector with the $n$ 
      spatial ranks as 
			$R = (0, \dots, 0)^\top$.
\item Compute the spatial rank $R_i$ of 
      each observation using \eqref{eq::srank}.
\item Define $H$ as the set of $h$ 
      observations with the lowest values 
			of $R_i$.	
\item Determine the weights $w_i$ using 
      \eqref{eq:hardReject}.
\item \textbf{Return:} the location weights 
      $w_i$ and the covariance weights 
			$u_i := w_i$\;. 			
\end{enumerate}
\end{algorithm}

\begin{algorithm}[ht]
\caption{Spatial sign covariance matrix 
         initial estimator}
\begin{enumerate}
\item \textbf{Input}: kernel matrix $K$.
\item Compute the spatial median $m_\mF = 
      \sum_{i=1}^N \gamma_i \phi(x_i)$ 
			using Algorithm 2. 
\item The location weights are
      taken as $w_i= \gamma_i$\;.
\item Compute the distances
      $||\phi(x_i) - m_\mF||$ to the 
			spatial median from 
			\eqref{eq:edistance}.
			The \mbox{covariance} weights are 
			then computed as
			$u_i = 1/||\phi(x_i) - m_\mF||$.
\item \textbf{Return:} the location 
      weights $w_i$ and the covariance 
			weights $u_i$\;. 
\end{enumerate}
\end{algorithm}

\vspace{0.2cm}

\begin{algorithm}[ht]
\caption{Refinement of initial estimators}
\begin{enumerate}
\item \textbf{Input}: kernel matrix $K$ and 
      number $h$, weights $w_i$ to 
			determine the initial estimate 
			of location $c_\mF$ by 
			\eqref{eq:wsumloc} and weights $u_i$
			to compute the initial
			estimate of scatter $\hSigma_\mF$ by 
			\eqref{eq:wsumscatter}.
\item Project the data on the eigenvectors 
      of $\hSigma_\mF$ using 
			\eqref{eq:refin_B}.
\item Compute the spatial median of the 
      adjusted kernel matrix 
			\eqref{eq:refin_Kadj} using 
			Algorithm 2, 
			which gives the weights $\gamma^*$. 
\item Determine the distance 
      $d^{*}_\mF(x)$ of each 
			observation by \eqref{eq:refin_dist}.
\item Define $H$ as the set of $h$ 
      observations with the lowest distance.	
\item \textbf{Output}: the refined subset $H$.
\end{enumerate}
\end{algorithm} 

\FloatBarrier
\clearpage

\subsection{The special case of the linear kernel} 
\label{sec:linearkernel}

Given an $n \times p$ matrix $X$ of observations,
its regularized covariance matrix is 
given by
\begin{equation*}
	\hSigma_{\mathrm{reg}} =
	(1-\rho)\hSigma_\mF + \rho I_\mF =
	\frac{1-\rho}{n-1} \tilde{\Phi} 
	\tilde{\Phi}^\top + \rho I_\mF\;.
\end{equation*}
The regularized kernel matrix is defined as
\begin{equation*}
	\tilde{K}_{\mathrm{reg}} = 
	(1-\rho)\tilde{K} + (n-1) \rho I_n =
	(1-\rho)\tilde{\Phi}^{\top} \tilde{\Phi} + 
	(n-1) \rho I_n.
\end{equation*}
For the linear kernel, the eigenvectors of 
the covariance matrix are given by
\begin{equation*}
	v_\mathcal{X}^k = \sum_{i=1}^{n} 
	(v_\mF^k)_i (x_i - c_\mF), 
\end{equation*}
where $c_\mF = \frac{1}{n}\sum_{i=1}^n x_i$ 
and $(v_\mathcal{F}^k)_i$ is the $i$-th 
element of the $k$-th eigenvector of the 
centered kernel matrix. 
We can now write $\hSigma_\mF$ using the 
eigenvectors of the kernel matrix:
\begin{equation*}
	\hSigma_\mF = V^T \Lambda V =
	\frac{1}{n-1}\tilde{X}^\top
	\tilde{V}_\mF \Lambda \tilde{V}_\mF^\top 
	\tilde{X},
\end{equation*}
where the $n \times n$ matrix 
$\tilde{V}_\mF = \left[
 v_1/\sqrt{\lambda_1},\ldots,
 v_n/\sqrt{\lambda_n}\right]$ contains the 
normalized eigenvectors of the centered 
kernel matrix and $\tilde{X}$ is the 
centered data matrix. The regularized 
covariance matrix is thus equal to
\begin{equation*}
	\hSigma_{\mathrm{reg}} = 
	\frac{1-\rho}{n-1}\tilde{X}^\top
	\tilde{V}_\mF \Lambda \tilde{V}_\mF^\top 
	\tilde{X} + \rho I_n\;.
\end{equation*}	

\clearpage
\subsection{\blue{Effect of the imposed
            condition number}}

\blue{The main text considers the condition 
number $\kappa$  of a covariance matrix in two 
different places. 
The first occurrence is in the KMRCD algorithm
itself. Indeed, the choice of the regularization
parameter $\rho$ in~\eqref{eq:egKernelMatrix}
must be such that the condition number
$\kappa(\rho)$ given 
by~\eqref{eq:conditionNumber} is at most 50. 
The second occurrence is in the simulation 
study with linear kernel, when in each
replication a random correlation matrix 
$\Sigma$ of type ALYZ is generated according
to Section 4 of \cite{agostinelli2015robust},
who impose that such correlation matrices
have condition number 
$\kappa(\mathrm{ALYZ}) = 100$\,.}

\blue{It is natural to wonder about the effect
of both choices, and to what extent they 
interact.
To study this we repeat part of the simulation
of Section~\ref{subsec:Lin_simulation}, for
the hardest settings with $(n,p)$ set to
$(400,200)$ and $(200,400)$ and with
$\varepsilon$ equal to 0 and 0.3 and $h = 0.5$.
The contamination types are as before. 
We now allow the condition number 
$\kappa(\mathrm{ALYZ})$ to be $100$, $300$ 
and $1000$. 
The imposed condition number $\kappa(\rho)$ in
the algorithm can now take the values $50$, 
$100$ and $1000$. 
The averaged KL divergence over 100 
replications is shown in 
Table~\ref{tbl:conditionNumber_KL}.} 

\begin{table}[ht]
\caption{\blue{The KL divergence in function 
of the condition number of the ALYZ covariance 
matrix $\kappa(\mathrm{ALYZ})$ and 
$\kappa(\rho)$ of the KMRCD algorithm 
with $h = 0.5$.}} 
\label{tbl:conditionNumber_KL}
\centering               
\begin{adjustbox}{width=1\textwidth}                        
\centering
\begin{tabular}{lcccccccccc}
\toprule                 
 & & \multicolumn{3}{c}{Point contamination} & \multicolumn{3}{c}{Shift contamination}  & \multicolumn{3}{c}{Cluster contamination}\\ 
&	& \multicolumn{3}{c}{Value of $\kappa(\mathrm{ALYZ})$} & \multicolumn{3}{c}{Value of $\kappa(\mathrm{ALYZ})$}  & \multicolumn{3}{c}{Value of $\kappa(\mathrm{ALYZ})$}\\ 	
				\cmidrule(lr){3-5} \cmidrule(lr){6-8} \cmidrule(lr){9-11}
& & 100 & 300 & 1000 & 100 & 300 & 1000 & 100 & 300 & 1000\\
& \textbf{$\varepsilon=0$\,:} \\
\midrule
\multirow{3}{*}{\rotatebox{90}{\tiny 400$\times$200}} & $\kappa(\rho) = 50$	&127 & 146 & 197 & 128 & 147 & 198 & 127 & 145 & 197 \\
& $\kappa(\rho) = 100$	&158 & 166 & 196 & 159 & 166 & 197 & 159 & 167 & 194\\
& $\kappa(\rho) = 1000$	&269 & 264 & 265 & 268 & 265 & 264 & 268 & 264 & 264 \\ 
\cmidrule(lr){1-11}
\multirow{3}{*}{\rotatebox{90}{\tiny 200$\times$400}} & $\kappa(\rho) = 50$	& 674 &	761	& 969&	677	&759&	969	&675&	759&	943 \\
& $\kappa(\rho) = 100$	&1104 & 1168 & 1340 & 1108 & 1169 & 1326 & 1107 & 1169 & 1323\\
& $\kappa(\rho) = 1000$ & 6480 & 6380 & 6401 & 6492 & 6366 & 6367 & 6440 & 6375 & 6403 \\ 
\; \\ 
 & \textbf{$\varepsilon=0.3$\,:} \\
  \cmidrule(lr){1-11}
\multirow{3}{*}{\rotatebox{90}{\tiny 400$\times$200}} & $\kappa(\rho) = 50$	& 125 & 142   & 11536 & 125 & 143 & 205 & 125 & 144 & 251 \\
& $\kappa(\rho) = 100$	& 156 & 163   & 20581 & 155 & 164 & 193 & 155 & 164 & 422 \\
& $\kappa(\rho) = 1000$	& 265 & 10189 & 84460 & 256 & 185 & 209 & 257 & 189 & 209 \\ 
\cmidrule(lr){1-11}
\multirow{3}{*}{\rotatebox{90}{\tiny 200$\times$400}} & $\kappa(\rho) = 50$	& 694  & 776   & 13729  & 692  & 786  & 1007 & 693  & 771  & 1041 \\
& $\kappa(\rho) = 100$	& 1138 & 1199  & 25464  & 1157 & 1216 & 1416 & 1152 & 1213 & 1344 \\
& $\kappa(\rho) = 1000$	& 6766 & 72332 & 195728 & 6906 & 1570 & 5405 & 6862 & 1568 & 5275 \\
\bottomrule
\end{tabular}
\end{adjustbox}
\end{table}

\blue{Comparing the choices of $\kappa(\rho)$, 
we observe that the strongest regularization
($\kappa(\rho) = 50$) consistently performed
best (had the lowest KL), across all 
scenarios considered.
That is, even when the true 
$\kappa(\mathrm{ALYZ})$ is substantially
above $50$, setting $\kappa(\rho)$ to $50$ 
did better or equally well than setting it 
to $100$ or $1000$.
This provides some support for the choice
$\kappa(\rho)=50$ in the KMRCD algorithm.}

\clearpage

\subsection{\blue{Comparison of initial 
            estimators}}
\label{subsec:InitialEst}

\blue{In this section we study the computation
time and the performance of the four initial 
estimators used by KMRCD, in several scenarios:
\begin{enumerate}
\item as in the simulation with linear kernel
  (Section~\ref{subsec:Lin_simulation}), with
	point contamination, $\varepsilon = 0.3$, 
	$h = 0.5$, and $(n,p) = (400,200)$;
\item as in the first setting, but now with
  $(n,p) = (200,400)$ so there are more
	variables than cases;
\item the $t$-copula setting of the 
  simulation with nonlinear kernels, as in
	Figure~\ref{fig:nonLinSimulation_T} of
	section~\ref{subsec:NonLin_simulation}, 
	with $\varepsilon = 0.2$ and  $h=0.75$.
\item the circle-based setting of the 
  simulation with nonlinear kernels, as in 
	Figure~\ref{fig:nonLinSimulation_Circle},
	also with $\varepsilon = 0.2$ and $h=0.75$.
\end{enumerate}
Each setting is replicated $100$ times, and
in each replication we monitor the computation 
time of the individual initial estimators as 
well as the time needed by the subsequent 
C-steps procedure. The averaged computation 
times (in seconds) are given in 
Table~\ref{tbl:InitialEst_time}.}

\begin{table}[ht]
\caption{\blue{Averaged computation times of 
the four initial estimators (init) and the 
subsequent C-steps, in the four data 
settings described in 
Section~\ref{subsec:InitialEst}.}} 
\label{tbl:InitialEst_time}
\centering               
\centering
\begin{tabular}{cccccccccc}
\toprule                 
 & \multicolumn{2}{c}{Spatial median} 
 & \multicolumn{2}{c}{SDO} 
 & \multicolumn{2}{c}{Spatial rank}  
 & \multicolumn{2}{c}{SSCM}\\ 
 \cmidrule(lr){2-3} \cmidrule(lr){4-5} 
 \cmidrule(lr){6-7} \cmidrule(lr){8-9}
 & init & C-steps & init & C-steps
 & init & C-steps & init & C-steps\\
\midrule
Setting 1 & 0.020 & 0.002 
 & 0.046 & 0.002 
 & 0.406 & 0.002 
 & 0.034 & 0.002 \\
Setting 2 & 0.007 & 0.001 
 & 0.029 & 0.001 
 & 0.070 & 0.001 
 & 0.010 & 0.001 \\
Setting 3 & 0.024 & 0.024 
 & 0.055 & 0.024 
 & 0.667 & 0.025 
 & 0.037 & 0.024  \\
Setting 4 & 0.023 & 0.018
 & 0.052 & 0.028 
 & 0.670 & 0.027  
 & 0.035 & 0.034 \\
\bottomrule
\end{tabular}
\end{table}

\blue{In Table~\ref{tbl:InitialEst_time}
we see that the C-steps never take longer
than their initial estimator, and that
the computation times of the C-steps are
similar across the four initial estimates.
Among the initial estimators, the spatial
rank took the longest, whereas the other
three took about the same time.
Also note that the bivariate settings 3
and 4 are not faster than setting 1 because 
their sample size is $n=500$, so the 
computations are done on kernel matrices
of size $500 \times 500$.}

\blue{We also counted the number of times 
each initial estimator provided the best 
solution (i.e. the lowest covariance 
determinant) after its C-steps converged.
The results are given in 
Table~\ref{tbl:InitialEst_count}.
Note that two or more estimators can give 
the same best solution, so the row sums
can exceed 100.
In settings 1 and 2 of high-dimensional 
data analyzed with the linear kernel, the 
SSCM initial estimator outperformed the 
others. In settings 3 and 4 with 
nonlinear kernels, all four initial
estimators performed about equally well.}

\begin{table}[ht]
\caption{\blue{The number of times each 
initial estimator provided the best solution 
(i.e. the lowest covariance determinant) 
after C-steps, for the 4 settings in 
Table~\ref{tbl:InitialEst_time}.}}
\label{tbl:InitialEst_count}
\centering               
\centering
\begin{tabular}{ccccc}
\toprule                 
 & Spatial median & SDO 
 & Spatial rank   & SSCM\\ 
\midrule
Setting 1 & 4  & 0  & 1  & 95 \\
Setting 2 & 0  & 5  & 0  & 95 \\
Setting 3 & 57 & 65 & 52 & 66 \\
Setting 4 & 52 & 60 & 58 & 53 \\
\bottomrule
\end{tabular}
\end{table}

\FloatBarrier
\subsection{Additional simulation results
            with linear kernel}	
	
Table \ref{tbl:MSE} shows the mean squared
error (MSE) of the estimates $\hSigma$ in the 
same setup as Table \ref{tbl:KL} 
of the main text. For readability the MSE
values are multiplied by 1000.
Table \ref{tbl:rho} lists the 
average values of the regularization 
parameter $\rho$ in each setting.
	
\begin{table}[ht] 
\caption{MSE for $\Sigma$ of type ALYZ 
         (multiplied by $1000$).} 
\label{tbl:MSE}
\centering               
\begin{adjustbox}{width=0.79\textwidth}                        
\centering
\begin{tabular}{cccccccccccc}
\toprule                 
  & \multicolumn{3}{c}{Point contamination} & \multicolumn{3}{c}{Shift contamination}  & \multicolumn{3}{c}{Cluster contamination}\\ 
	& \multicolumn{3}{c}{Value of $h/n$} & \multicolumn{3}{c}{Value of $h/n$}  & \multicolumn{3}{c}{Value of $h/n$}\\ 	
				\cmidrule(lr){2-4} \cmidrule(lr){5-7} \cmidrule(lr){8-10}
 & 0.50 & 0.75 & 0.90 & 0.50 & 0.75 & 0.90 & 0.50 & 0.75 & 0.90\\
\textbf{$\varepsilon=0$\,:} \\
\midrule
\textbf{KMRCD} \\
			400$\times$200	&4.31	& 2.96	& 2.47	& 4.33	& 2.94	& 2.46	& 4.35	& 2.94	& 2.43		\\	
			300$\times$200	& 5.51	& 3.81	& 3.19	& 5.44	& 3.80	& 3.18	& 5.52	& 3.79	& 3.21	\\	
			200$\times$200	& 7.88	& 5.47	& 4.66	& 7.78	& 5.54	& 4.71	& 7.77	& 5.42	& 4.71	\\	
			200$\times$300	& 7.36	& 5.21	& 4.45	& 7.29	& 5.21	& 4.45	& 7.37	& 5.22	& 4.45	\\	
			200$\times$400	& 6.95	& 5.04	& 4.32	& 6.91	& 4.99	& 4.31	& 6.94	& 5.11	& 4.29		\\	
\; \\
\textbf{MRCD} \\
			400$\times$200	& 4.32	& 2.99	& 2.51	& 4.32	& 2.96	& 2.51	& 4.34	& 2.99	& 2.48		\\	
			300$\times$200	& 5.59	& 3.83	& 3.22	& 5.51	& 3.83	& 3.21	& 5.54	& 3.83	& 3.26	\\	
			200$\times$200	& 7.89	& 5.60	& 4.67	& 7.85	& 5.59	& 4.69	& 7.75	& 5.61	& 4.70		\\	
			200$\times$300	& 7.42	& 5.28	& 4.51	& 7.42	& 5.28	& 4.49	& 7.47	& 5.33	& 4.52	\\	
			200$\times$400	& 7.04	& 5.04	& 4.33	& 7.08	& 5.04	& 4.36	& 7.01	& 5.09	& 4.31	\\	
\; \\
\textbf{$\varepsilon=0.1$\,:} \\
\midrule
\textbf{KMRCD} \\
			400$\times$200	& 4.35	& 2.99	& 2.53	& 4.38	& 3.00	& 2.51	& 4.37	& 2.99	& 2.50	\\
			300$\times$200	& 5.61	& 3.86	& 3.29	& 5.59	& 3.90	& 3.28	& 5.60	& 3.89	& 3.27	\\
			200$\times$200	& 7.97	& 5.60	& 4.86	& 7.93	& 5.58	& 4.84	& 7.99	& 5.64	& 4.83	\\
			200$\times$300	& 7.38	& 5.31	& 4.59	& 7.44	& 5.32	& 4.56	& 7.39	& 5.30	& 4.60	\\
			200$\times$400	& 7.09	& 5.13	& 4.39	& 7.03	& 5.11	& 4.41	& 7.04	& 5.10	& 4.43	\\
			\; \\
\textbf{MRCD} \\ 
			400$\times$200	& 4.43	& 3.05	& 2.58	& 4.52	& 3.12	& 2.60	& 4.49	& 3.11	& 2.59	\\
			300$\times$200	& 5.75	& 3.95	& 3.34	& 5.85	& 4.05	& 3.35	& 5.88	& 4.04	& 3.36	\\
			200$\times$200	& 8.33	& 5.74	& 4.94	& 8.18	& 5.81	& 4.97	& 8.20	& 5.86	& 4.96	\\
			200$\times$300	& 7.68	& 5.45	& 4.62	& 7.78	& 5.57	& 4.69	& 7.78	& 5.52	& 4.72	\\
			200$\times$400	& 7.22	& 5.15	& 4.38	& 7.44	& 5.28	& 4.52	& 7.41	& 5.31	& 4.55	\\
\; \\
\textbf{$\varepsilon=0.3$\,:} \\
\midrule
\textbf{KMRCD} \\
			400$\times$200	& 4.55	&247.32	& 58.13	& 4.55	& 18.74	& 40.78	& 4.54	& 18.75	& 39.88	\\	
			300$\times$200	& 5.80	&242.03	& 58.59	& 5.88	& 18.34	& 38.22	& 5.80	& 18.47	& 39.34	\\	
			200$\times$200	& 8.28	&246.56	& 59.43	& 8.51	& 19.15	& 37.54	& 8.37	& 19.09	& 37.85	\\	
			200$\times$300	& 7.78	&109.01	& 26.17	& 7.88	& 9.43	& 16.42	& 7.87	& 9.43	& 16.47	\\	
			200$\times$400	& 7.41	& 61.23	& 15.02	& 7.40	& 6.08	& 9.44	& 7.45	& 6.13	& 9.49	\\
\; \\
\textbf{MRCD} \\
			400$\times$200	& 4.16	&329.39	& 89.32	& 4.85	& 37.14	&115.04	& 4.87	& 36.75	&114.14	\\	
			300$\times$200	& 5.25	&312.86	& 90.28	& 6.39	& 37.43	&109.50	& 6.31	& 37.72	&114.09	\\	
			200$\times$200	& 7.44	&326.92	& 92.79	& 9.07	& 38.99	&108.50		& 8.83& 39.07	&110.41		\\
			200$\times$300	& 6.49	&128.66	& 33.82	& 8.52	& 17.44	& 42.28	& 8.56	& 17.53	& 42.15	\\	
			200$\times$400	& 5.62	& 64.31	& 16.70	& 7.97	& 10.51	& 21.52	& 8.03	& 10.75	&21.83	\\	
\bottomrule
\end{tabular}
\end{adjustbox}
\end{table}

\begin{table}[ht] 
\caption{Regularization coefficients for $\Sigma$ of type ALYZ.} 
\label{tbl:rho}
\centering               
\begin{adjustbox}{width=0.8\textwidth}                        
\centering
\begin{tabular}{cccccccccccc}
\toprule  
  & \multicolumn{3}{c}{Point contamination} & \multicolumn{3}{c}{Shift contamination}  & \multicolumn{3}{c}{Cluster contamination}\\ 
	& \multicolumn{3}{c}{Value of $h/n$} & \multicolumn{3}{c}{Value of $h/n$}  & \multicolumn{3}{c}{Value of $h/n$}\\ 	
				\cmidrule(lr){2-4} \cmidrule(lr){5-7} \cmidrule(lr){8-10}
 & 0.50 & 0.75 & 0.90 & 0.50 & 0.75 & 0.90 & 0.50 & 0.75 & 0.90\\
\textbf{$\varepsilon=0$}\,: \\
\midrule										
\textbf{KMRCD} \\
			400$\times$200	&0.091	&0.075	&0.069	&0.093	&0.073	&0.071	&0.088	&0.077	&0.072	\\
			300$\times$200	&0.094	&0.086	&0.079	&0.097	&0.089	&0.079	&0.097	&0.088	&0.087		\\
			200$\times$200	&0.117	&0.096	&0.091	&0.119	&0.100	&0.091	&0.117	&0.098	&0.092	\\
			200$\times$300	&0.155	&0.116	&0.107	&0.155	&0.116	&0.109	&0.162	&0.116	&0.108	\\
			200$\times$400	&0.179	&0.143	&0.125	&0.178	&0.139	&0.124	&0.178	&0.138	&0.126	\\
\; \\
\textbf{MRCD} \\
			400$\times$200	&0.091	&0.073	&0.063	&0.093	&0.070	&0.064	&0.092	&0.073	&0.064	\\
			300$\times$200	&0.100	&0.084	&0.078	&0.098	&0.087	&0.075	&0.099	&0.085	&0.077	\\
			200$\times$200	&0.115	&0.099	&0.094	&0.115	&0.102	&0.094	&0.115	&0.102	&0.092	\\
			200$\times$300	&0.143	&0.116	&0.107	&0.140	&0.119	&0.108		&0.145	&0.117	&0.107	\\
			200$\times$400	&0.164	&0.139	&0.125	&0.169	&0.138	&0.125	&0.165	&0.134	&0.124	\\	
\; \\
\textbf{$\varepsilon=0.1$\,:} \\
\midrule
\textbf{KMRCD} \\
			400$\times$200	&0.089	&0.079	&0.074	&0.089	&0.076	&0.074	&0.095	&0.080	&0.069	\\
			300$\times$200	&0.098	&0.085	&0.079	&0.097	&0.089	&0.083	&0.098	&0.088	&0.079	\\
			200$\times$200	&0.424	&0.098	&0.097	&0.120	&0.099	&0.090	&0.120	&0.098	&0.092	\\
			200$\times$300	&0.156	&0.120	&0.110	&0.153	&0.118	&0.112	&0.150	&0.118	&0.112	\\
			200$\times$400	&0.181	&0.139	&0.127	&0.176	&0.140	&0.126	&0.176	&0.140	&0.125	\\
\; \\
\textbf{MRCD} \\
			400$\times$200	&0.087	&0.073	&0.064	&0.082	&0.061	&0.059	&0.081	&0.065	&0.054	\\
			300$\times$200	&0.096	&0.081	&0.073	&0.092	&0.076	&0.069	&0.089	&0.073	&0.065	\\
			200$\times$200	&0.115	&0.096	&0.091	&0.102	&0.088	&0.078	&0.103	&0.088	&0.081	\\
			200$\times$300	&0.146	&0.118	&0.109	&0.133	&0.105	&0.098	&0.129	&0.107	&0.100	\\
			200$\times$400	&0.172	&0.138	&0.130	&0.153	&0.125	&0.115	&0.154	&0.124	&0.114	\\
\; \\
\textbf{$\varepsilon=0.3$\,:} \\
\midrule
\textbf{KMRCD} \\
			400$\times$200	&0.094	&0.524	&0.740	&0.097	&0.507	&0.733	&0.091	&0.517	&0.744	\\
			300$\times$200	&0.100	&0.505	&0.738	&0.101	&0.527	&0.746	&0.101	&0.521	&0.742	\\
			200$\times$200	&0.124	&0.507	&0.746	&0.119	&0.511	&0.749	&0.118	&0.521	&0.745	\\
			200$\times$300	&0.151	&0.508	&0.749	&0.153	&0.516	&0.748	&0.155	&0.504	&0.748	\\
			200$\times$400	&0.178	&0.512	&0.740	&0.175	&0.510	&0.737		&0.174	&0.518	&0.748	\\
\; \\
\textbf{MRCD} \\ 
			400$\times$200	&0.534	&0.644	&0.705	&0.071	&0.308	&0.541	&0.068	&0.314	&0.555	\\
			300$\times$200	&0.459	&0.745	&0.724	&0.078	&0.325	&0.557	&0.080	&0.323	&0.558	\\
			200$\times$200	&0.750	&0.688	&0.712	&0.092	&0.319	&0.565	&0.089	&0.326		&0.564	\\
			200$\times$300	&0.774	&0.770	&0.760	&0.116	&0.349	&0.591	&0.118	&0.338	&0.592	\\
			200$\times$400	&0.796	&0.790	&0.775	&0.139	&0.364	&0.610	&0.142	&0.372	&0.616	\\
\bottomrule
\end{tabular}
\end{adjustbox}
\end{table}

\clearpage
\newpage

\subsection{\blue{Additional figures 
  for non-elliptical data}}

\blue{Here we illustrate the Frank, Clayton, 
and Gumbel 
copulas~\cite{nelsen2007introduction} 
with Kendall rank correlation 
$\tau = 0.6$\,. 
Figure \ref{fig:SMnonLinSimulation}
below is analogous to 
Figure~\ref{fig:nonLinSimulation_T}
for the $t$ copula in the main text.}

\begin{figure}[ht]
\centering
\includegraphics[width=0.88\textwidth]
  {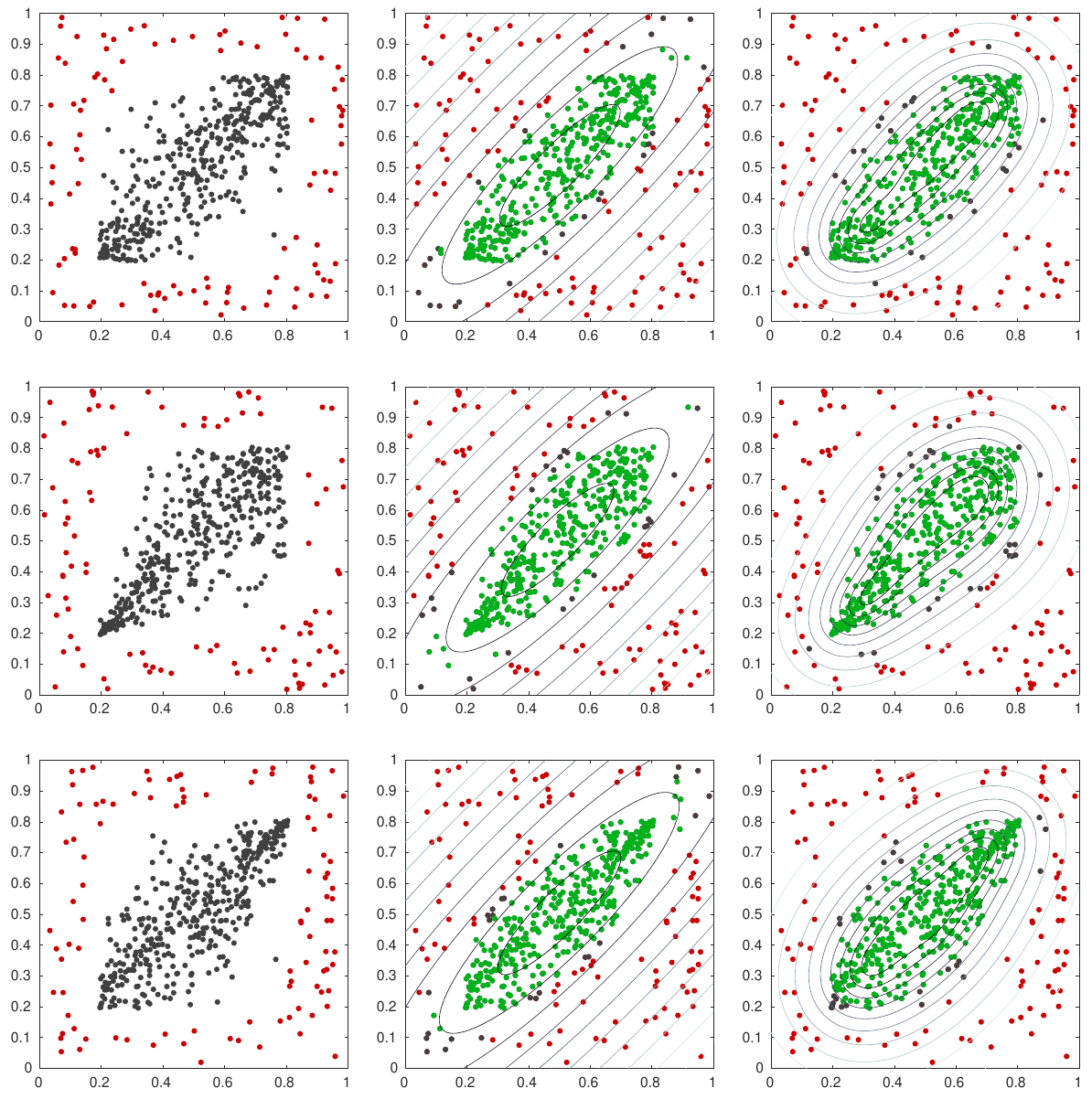}
\vspace{-2mm}	
\caption{\blue{Illustration of the non-elliptical 
simulation setting, on data sets generated from 
the Frank (top), Clayton (middle), and Gumbel 
(bottom) copulas. 
Each dataset contains $20\%$ of outlying 
observations. In the left panels, the regular 
observations are shown in black and the outliers 
in red. The results of the MRCD estimator are in 
the middle panels, and those of the KMRCD 
estimator in the rightmost panels, each for
$h = 0.75n$. In those panels the points in the 
h-subset are shown in green, and the other
points with the $n(1-\varepsilon)$ lowest 
(kernel) Mahalanobis distance are depicted in 
grey. The remaining points are shown in red. 
The curves are contours of the 
robust (kernel) Mahalanobis distance.}}
\label{fig:SMnonLinSimulation}
\end{figure}

\end{document}